\newcommand{\virg}[1]{``#1'' }
\newcommand{\norm}[1]{\left\lVert#1\right\rVert}
\lbrace\begin{array}{@{}l@{}}}%
\begin{document}

\title{Exploiting locality in high-dimensional Factorial hidden Markov models}

\author{\name Lorenzo Rimella \email l.rimella@lancaster.ac.uk \\
       \addr Department of Mathematics and Statistics\\
       Lancaster University\\
       Lancaster, LA1 4YF, UK
       \AND
       \name Nick Whiteley \email nick.whiteley@bristol.ac.uk \\
       \addr Institute for Statistical Science\\
       School of Mathematics\\
       University of Bristol\\
       Bristol, BS8 1TW, UK\\
       and the Alan Turing Institute, UK}

\editor{Barbara Engelhardt}

\maketitle

\begin{abstract}%
We propose algorithms for approximate filtering and smoothing in high-dimensional Factorial hidden Markov models. The approximation involves discarding, in a principled way, likelihood factors according to a notion of locality in a factor graph associated with the emission distribution. This allows the exponential-in-dimension cost of exact filtering and smoothing to be avoided. We prove that the approximation accuracy, measured in a local total variation norm, is \virg{dimension-free} in the sense that as the overall dimension of the model increases the error bounds we derive do not necessarily degrade. A key step in the analysis is to quantify the error introduced by localizing the likelihood function in a Bayes' rule update. The factorial structure of the likelihood function which we exploit arises naturally when data have known spatial or network structure. We demonstrate the new algorithms on synthetic examples and a London Underground passenger flow problem, where the factor graph is effectively given by the train network.
\end{abstract}

\begin{keywords}
Factorial hidden Markov models, filtering, smoothing, EM algorithm, high-dimensions
\end{keywords}

\section{Introduction}\label{sec:intro}



Since early appearance in the statistical literature \citep{baum1966statistical,baum1970maximization} and popularization in speech recognition \citep{rabiner1989tutorial}, Hidden Markov models (HMMs) have been used to solve a broad range of problems ranging from texture recognition \citep{bose1994connected}, to gene prediction \citep{stanke2003gene}, and weather forecasting \citep{hughes1999non}. The influential paper of \cite{Ghahramani1997} introduced the class of Factorial hidden Markov models (FHMMs), in which the hidden Markov chain is a multivariate process, with a-priori independent coordinates. This structure provides a rich modelling framework to capture complex statistical patterns in data sequences and it is particularly suited to applications with straightforward local dependencies: traffic modelling, weather forecasting, etc.

The main objective of the present paper is to develop rigorous insight into how inference in FHMMs can be scaled up to high-dimensional problems. The problem one immediately faces when scaling up exact inference techniques for FHMM's is the computational cost of the matrix-vector operations underlying the well known forward-backward algorithm, which computes conditional distributions over hidden states given data, and in turn the Baum-Welch algorithm to perform maximum-likelihood estimation of static parameters. This cost typically grows exponentially with the dimension of the underlying state-space.  \cite{Ghahramani1997} derived a variant of the forward-backward algorithm which achieves a degree of efficiency by exploiting the structure of FHMMs, but the exponential-in-dimension scaling of cost cannot be avoided.

As an alternative to exact inference, \cite{Ghahramani1997} also proposed two families of variational methods for FHMMs, which allow approximate solution of the {smoothing} problem: computing conditional distributions over hidden states given past and future data. In turn, this allows approximate maximum likelihood parameter estimation via an expectation-maximization (EM) algorithm. The attractive feature of these variational approximations is,  in typical FHMMs, that their computational complexity is a low-order polynomial in state dimension. However, little seems to be known about their performance in the context of FHMMs from a theoretical point of view, other than the fact that by construction they minimize a Kullback-Liebler divergence criterion. As surveyed recently by \cite{blei2017variational}(Section 5), there are a number of strands of research into theoretical properties of variational methods for some classes of statistical models, such as convergence analysis for mixture models \citep{wang2006convergence}, and consistency studies for stochastic block models \citep{celisse2012consistency,bickel2013asymptotic}. However analysis specifically for FHMMs appears to be lacking, and to the authors' knowledge there are currently no detailed mathematical studies of how variational approximation errors for FHMMs scale with dimension, data record length, model parameters, etc.

\cite{boyen1998tractable, boyen1999exploiting}, proposed and studied inference methods in Dynamic Bayesian Networks which involve recursively approximating belief-state distributions by the product of their marginals and then propagating the result to the next time step. Particle filtering algorithms in the same vein appeared in \cite{ng2002factored}, \cite{brandao2006subspace} and \cite{besada2009parallel}. Ground-breaking theoretical work of \cite{rebeschini2015can} proved that similar algorithms can be used to conduct particle filtering efficiently in high-dimensions, using techniques based on the Dobrushin Comparison Theorem \citep[see for instance ][]{georgii2011gibbs}. Subsequently, \cite{rebeschini2014comparison} refined their analysis through generalized Dobrushin comparison theorems. \cite{finke2017approximate} extended these ideas from particle filtering to particle smoothing and studied dimension-independence of the asymptotic Monte Carlo variance. In the present work we do not consider sampling-based approximate inference techniques because we focus on FHMMs with discrete hidden states. If one were to consider FHMMs with continuous hidden states then following the blueprint of the aforementioned works one could devise particle filtering and smoothing versions of the approximate inference algorithms we propose, but this is beyond the scope of the present work. Another key difference between the aforementioned works and ours is that they do not considered FHMMs, but rather models in which posterior dependence across dimensions arises from the prior. For FHMMs, this dependence arises only from the emission distribution likelihood function and we use not only factorization but also the key notion of \emph{localization}, introduced below, to deal with this.

\subsection{Setting and Contributions}\label{sec:setting_and_cont}
In this paper we propose and study approximate inference algorithms for FHMMs called the \emph{Graph Filter} and \emph{Graph Smoother}.  In some ways, the Graph Filter and Smoother are similar in spirit to variational methods: they involve constructing approximate posterior distributions over hidden states which factorize across dimension. However, unlike variational methods, they do not involve explicit minimization of a Kullback-Liebler divergence criterion and therefore avoid the fixed-point iterations or other numerical optimization procedures which variational methods typically involve. Another contrast between the variational methods of \cite{Ghahramani1997} and the Graph Filter and Smoother  is that the latter share the recursive-in-time structure of the forward-backward algorithm. The forward pass, which conducts the task of \emph{filtering}, can therefore be used for prediction in online settings. Variational methods for FHMMs work in a batch setting, suitable for offline data analysis.

In taking first steps towards rigorously understanding the quality of factorization-based approximations it is natural and desirable to impose on the FHMM: 
\begin{enumerate}
\item structure which allows one to express connections between the scaling properties of the approximation errors, computational cost and dimensionality, and
\item assumptions that allow the weakness of dependence across dimensions to be quantified.
\end{enumerate}
In a FHMM, by definition, hidden state variables are a-priori independent across dimensions, so any dependence across dimensions under the corresponding posterior distributions must arise from the emission distribution. Therefore the emission distribution is where we impose structure and assumptions as per 1. and 2. above. As made precise in equation \eqref{eq:likelihood_factorization} below, we assume the emission distribution likelihood function has a factorial structure, expressed in terms of a factor graph. Whilst any FHMM can be expressed in this general form (there could be just one factor, which is a function of all dimensions of the hidden state) the Graph Filter and Smoother allow the exponential-in-dimension cost of inference to be significantly reduced when there are several factors, each depending on some ``small'' subset of the hidden state dimensions. Part of the novelty of our theoretical results is that in this setting it turns out that graph distance on the factor graph can be used to quantify the strength of statistical dependence across dimension, and this allows us to express approximation error in terms of quantities  such as the cardinalities of neighborhoods on the factor graph, which do not necessarily grow with overall dimension of the FHMM. The main message conveyed by our theoretical results is then that the Graph Filter and Smoother have  \virg{dimension-free} error performance, in the sense that as the dimension of the underlying state space increases, the assumptions we make do not necessarily become more demanding, and the error bounds we derive do not necessarily degrade. The proofs are heavily influenced by the Dobrushin Comparison techniques used by \cite{rebeschini2015can}.

One situation in which this factorial emission likelihood structure naturally arises is when, at each time step, the observed data consist of a set of spatially-indexed values, and the factor graphs captures this spatial structure, or at least some topological aspects thereof. This is the case, for example, in models of transport flow data \citep{hofleitner2012learning, woodard2017predicting}, where observations measure the numbers of vehicles or passengers at various interconnected locations on a rail, road or flight network, and the hidden variables indicate levels of congestion in spatially localized regions. Here the factor graph arises naturally from the transport network. In this paper we provide numerical results for such a model analyzing data from the London Tube underground rail network. Another domain in which this structure arises is spatio-temporal epidemiology, where the data are noisy, incomplete measurement of disease states across a collection of locations with pairwise dependence \citep{park2020inference, asfaw2021statistical}. Spatial localization of observations is a key element of geophysical models, often treated in terms of continuous hidden variables, with inference performed e.g. using particle filters \citep{van2009particle}. The FHMM structure we consider encompasses discrete-state counterparts of these models. The same is true of high-dimensional multivariate stochastic volatility models with cross-leverage effects \citep{xu2019particle}, where it is common to use discrete states to capture switching between volatility regimes. Discrete-state models also arise when detecting change-points in spatio-temporal data such as temperatures and air-pollution levels \citep{knoblauch2018spatio}. Such models could be cast as FHMMs with factorial emission likelihood structure.


Section \ref{sec:FHMM} introduces the class of FHMMs we consider and background on exact inference by filtering and smoothing. Section \ref{sec:approxFS} introduces the Graph Filter and Smoother, it discusses their computational complexity and our main results, Theorems \ref{th:goodapprox} and \ref{thm:smoothing}. An important component in our analysis is a preliminary result concerning approximate Bayes' rule updates using localized factorial likelihoods, Proposition \ref{prop:goodmarg}, which may be of independent interest. 

Numerical experiments are reported in Section \ref{sec:numerical}. We illustrate that the scaling of errors with algorithm parameters and model attributes indicated by our theoretical results can have a substantial impact in practice by avoiding exponential-in-dimension computational cost. Considering synthetic data so that true parameter values are known, we show how the Graph Smoother can be used within an approximate EM algorithm to fit the parameters of an FHMM and we compare accuracy to the approximate EM approach of \cite{Ghahramani1997} involving variational approximations.

Finally, we illustrate the utility of the Graph Filter and Smoother analysing Oyster card \virg{tap} data on the London Underground train network, estimating parameters and performing prediction of passenger inflow-outflow through stations, showing that Graph Filter-Smoother can be used to model complex spatial interactions over time. Some possible extensions and future research directions are described in Section \ref{sec:conc}. All proofs and a collection of supporting results are in the online appendix (\href{https://github.com/LorenzoRimella/GraphFilter-GraphSmoother/blob/master/Appendix-Exploiting-locality-in-high-dimensional-factorial-hidden-Markov-models.pdf}{\bf Online appendix}). All the codes are implemented in Python 3 and available on Github (\href{https://github.com/LorenzoRimella/GraphFilter-GraphSmoother}{\bf Online code}).

\section{Factorial hidden Markov models, filtering and smoothing} \label{sec:FHMM}

Over a time horizon of length $T \in \mathbb{N}$, a HMM is a pair of processes $(X_t)_{t \in \{0, \dots, T\} }$, $(Y_t)_{t \in \{1, \dots, T\} }$ where the unobserved process $(X_t)_{t \in \{0, \dots, T\} }$ is a Markov chain, and each observation $Y_t$ is conditionally independent of all other variables given $X_t$.

We consider the case where the state-space of $(X_t)_{t \in \{0, \dots, T\} }$ is  of product form, $X_t = (X_t^v)_{v\in V}\in\mathbb{X}^V$ where $\mathbb{X}$ and $V$ are finite sets, and write $L\coloneq\mathbf{card}(\mathbb{X})$ and $M\coloneq\mathbf{card}(V)$. Each $(Y_t)_{t \in \{1, \dots, T\} }$ is valued in a set $\mathbb{Y}$ which could be a discrete set, $\mathbb{R}^d$ or some subset thereof.

We write $\mu_0(x)$ for the probability mass function of $X_0$,  $p(x,z)$ for the transition probability of $(X_t)_{t \in \{0, \dots, T\} }$ from $x$ to $z$, and  $g(x,y)$ for the conditional probability mass or density function of $Y_t$ given $X_t$. In the literature, $g(x,\cdot)$ is often called the emission distribution.

For any $U\subseteq V$ and  $x=(x^v)_{v\in V}\in\mathbb{X}^V $ we shall use the shorthand $x^U=(x^v)_{v\in U}$, and similarly for any probability mass function $\mu$ on $\mathbb{X}^V$ we shall denote its marginal associated with $U$ by $\mu^U$.  When $\mathcal{K}$ is any partition of the set $V$, we shall say that $\mu$  factorizes with respect to $\mathcal{K}$ if:
$$
\mu(x)=\prod_{K\in\mathcal{K}} \mu^K(x^K),\quad \forall x\in \mathbb{X}^V,
$$
and in this situation we will use the shorthand:
$$
\mu = \bigotimes\limits_{K \in \mathcal{K}} \mu^K.
$$

The total variation distance between probability mass functions on $\mathbb{X}^U$, say $\mu$ and $\nu$, is denoted by:
$$
\|\mu-\nu\| \coloneq \sup_{A\in\sigma(\mathbb{X}^U)} |\mu(A)-\nu(A)|,
$$
with the obvious overloading of notation $\mu(A)=\sum_{x^U\in A}\mu(x^U)$ and where $\sigma(\mathbb{X}^U)$ is the power set of $\mathbb{X}^U$.  When $\mu,\nu$  are probability mass functions on $\mathbb{X}^V$ it will be convenient to denote the local total variation (LTV) distance associated with $U\subseteq V$,
$$
\|\mu-\nu\|_U \coloneq \sup_{A\in\sigma(\mathbb{X}^U)} |\mu^U(A)-\nu^U(A)|.
$$
\subsection{Factorial hidden Markov Models} \label{subsec:FHMM}

\paragraph*{}
In FHMMs \citep{Ghahramani1997} the transition probabilities of the hidden Markov chain are assumed to factorize in the following manner:
\begin{equation} \label{eq:prior_factorization}
p(x,z)=\prod_{v\in V}p^v(x^v,z^v),
\end{equation}
where each $p^v(x^v,z^v)$ is a transition probability on $\mathbb{X}$. Directed acyclic graphs showing the conditional independence structures of HMMs and FHMMs are shown in Figure \ref{fig:graph}.
{
\begin{figure}[httb!]
\begin{subfigure}[b]{0.5\textwidth}
\caption{}\label{fig:HMM}
\centering
\resizebox{1\textwidth}{0.125\textheight}{%
\begin{tikzpicture}[-latex, auto, node distance =2 cm and 3cm ,on grid ,state/.style ={ circle ,top color =white , draw , text=blue,font=\bfseries , minimum width =1.3 cm}]
\node[state] (Xt-1) at (0,0)
{$X_{t-1}$};
\node[state] (Yt-1)[below =of Xt-1]
{$Y_{t-1}$};
\node[state] (Xt) [right =of Xt-1] {$X_t$};
\node[state] (Yt)[below =of Xt]
{$Y_{t}$};
\node[state] (Xt+1) [right =of Xt] {$X_{t+1}$};
\node[state] (Yt+1)[below =of Xt+1]
{$Y_{t+1}$};
{};

\path (-2,0) edge node {} (Xt-1);
\path (Xt-1) edge node {} (Xt);
\path (Xt) edge node {} (Xt+1);
\path (Xt+1) edge node {} (8,0);

\path (Xt-1) edge node {} (Yt-1);
\path (Xt) edge node {} (Yt);
\path (Xt+1) edge node {} (Yt+1);
\end{tikzpicture}
}
\end{subfigure}
\begin{subfigure}[b]{0.5\textwidth}
\caption{}\label{fig:FHMM}
\centering
\resizebox{1\textwidth}{0.215\textheight}{%
\begin{tikzpicture}[-latex, auto, node distance =2 cm and 3cm ,on grid ,state/.style ={ circle ,top color =white , draw , text=blue,font=\bfseries , minimum width =1.3 cm}]
\node[state] (Xt-1v) at (0,0)
{$X_{t-1}^v$};
\node[state] (Yt-1)[below =of Xt-1v]
{$Y_{t-1}$};
\node[state] (Xtv) [right =of Xt-1v] {$X_t^v$};
\node[state] (Yt)[below =of Xtv]
{$Y_{t}$};
\node[state] (Xt+1v) [right =of Xtv] {$X_{t+1}^v$};
\node[state] (Yt+1)[below =of Xt+1v]
{$Y_{t+1}$};

\node[state] (Xt-1w)[above =of Xt-1v]
{$X_{t-1}^w$};
\node[state] (Xtw) [above =of Xtv] {$X_t^w$};
\node[state] (Xt+1w) [above =of Xt+1v] {$X_{t+1}^w$};

{};

\path (-2, 0) edge node {} (Xt-1v);
\path (Xt-1v) edge node {} (Xtv);
\path (Xtv) edge node {} (Xt+1v);
\path (Xt+1v) edge node {} (8, 0);

\path (-2,2) edge node {} (Xt-1w);
\path (Xt-1w) edge node {} (Xtw);
\path (Xtw) edge node {} (Xt+1w);
\path (Xt+1w) edge node {} (8, 2);

\path (Xt-1v) edge node {} (Yt-1);
\path (Xtv) edge node {} (Yt);
\path (Xt+1v) edge node {} (Yt+1);

\path (Xt-1w) edge [bend left = 40] node {} (Yt-1);
\path (Xtw) edge [bend left = 40] node {} (Yt);
\path (Xt+1w) edge [bend left = 40] node {} (Yt+1);


\path (-0.5, 3.75) edge [bend right = 40] node {} (Yt-1);
\path (2.5, 3.75) edge [bend right = 40] node {} (Yt);
\path (5.5, 3.75) edge [bend right = 40] node {} (Yt+1);
\end{tikzpicture}
}
\end{subfigure}
\caption{A conditional independence structure of a HMM (\subref{fig:HMM}) and an FHMM (\subref{fig:FHMM})}\label{fig:graph}
\end{figure}
}
As motivated in section \ref{sec:setting_and_cont}, in this paper we consider a factorial structure for the likelihood function $x\mapsto g(x,y)$. Let $F$ be a finite set and let $\mathcal{G}=(V,F,E)$ be a factor graph associated with $x\mapsto g(x,y)$, that is a bi-partite graph with vertex sets $V,F$ and edge set $E$ such that $g(x,y)$ can be written in terms of factors:
\begin{equation} \label{eq:likelihood_factorization}
g(x,y) = \prod\limits_{f \in F} g^f ( x^{N{(f)}},y ),
\end{equation}
where $N(\cdot)$ is the neighbourhood function,
\begin{equation} \label{neighfunc}
N{(w)} \coloneq \{{w'} \in V  \cup F: (w,{w'}) \in E\},\quad w\in V \cup {F}.
\end{equation}
In applications each observation $y$ will typically be multivariate and each likelihood factor $g^f(x^{N{(f)}},y)$ may depend on $y$ only through some subset of its constituent variates, but the details will be model specific, so we do not introduce them at this stage. We note that equations \ref{eq:prior_factorization} and \ref{eq:likelihood_factorization} are the mathematical formalization of \virg{local structures}, indeed equation \ref{eq:prior_factorization} says that each component of the hidden variable is evolving independently, while equation \ref{eq:likelihood_factorization} stands that the emission distribution can be decomposed in pieces that can be built locally.

\begin{figure}[httb!]
\centering
\resizebox{0.55\textwidth}{0.165\textheight}{%
\begin{tikzpicture}[on grid ,state/.style ={ circle ,top color =white , draw , text=blue,font=\bfseries , minimum width =1 cm}]



\draw[thick] (-1,0) -- (-1,0) node[left,blue] {$X_t$};
\draw[thick] (-1,-3) -- (-1,-3) node[left,blue] {$Y_t$};


\node[state] (Xt1) at (0,0) {$X_t^{1}$};
\node[state] (Xt2) at (2,0) {$X_t^{2}$};
\node[state] (Xt3) at (4,0) {$X_t^{3}$};
\node[state] (Xt4) at (6,0) {$X_t^{4}$};
\node[state] (Xt5) at (8,0) {$X_t^{5}$};

\node[state, diamond, text=red] (f1) at (0,-2-1) {$f_1$};
\node[state, diamond, text=red] (f2) at (2.75,-2-1) {$f_2$};
\node[state, diamond, text=red] (f3) at (5.25,-2-1) {$f_3$};
\node[state, diamond, text=red] (f4) at (8,-2-1) {$f_4$};

\path (Xt1) edge [red] node {} (f1);
\path (Xt2) edge [red] node {} (f1);
\path (Xt1) edge [red] node {} (f2);
\path (Xt2) edge [red] node {} (f2);
\path (Xt3) edge [red] node {} (f2);
\path (Xt3) edge [red] node {} (f3);
\path (Xt4) edge [red] node {} (f3);
\path (Xt4) edge [red] node {} (f4);
\path (Xt5) edge [red] node {} (f4);

\draw[blue,thick,dotted] (0-0.75,0+0.75)  rectangle (8+0.75,0-0.75);
\draw[blue,thick,dotted] (0-0.75,-2+0.75-1)  rectangle (8+0.75,-2-0.75-1);

\end{tikzpicture}
}
\caption{An example of factor graph for a fixed time step $t$ of an FHMM where $V= \{ 1,2,3,4,5 \}$ and $F= \{ f_1,f_2,f_3,f_4 \}$.} \label{fig:factorgraph}
\end{figure}

\subsection{Filtering and Smoothing}\label{subsec:exact_filt_smooth}
The tasks of {filtering} and {smoothing} at time $t$ are to compute, respectively, the conditional distributions of $X_t$ given the realized observations $(y_1,\ldots,y_t)$ and $(y_1,\ldots,y_T)$. We shall denote the corresponding probability mass functions by $\pi_t$ and $\pi_{t|T}$. Aswell as facilitating inference about hidden states, computing these distributions is a often a key step in estimating static parameters of the HMM.

Filtering can be conducted in a forward pass through the data:
\begin{equation} \label{filtering}
\pi_0 \coloneq \mu_0, \qquad
\pi_t \coloneq \mathsf{F}_t \pi_{t-1},\qquad \mathsf{F}_t \coloneq \mathsf{C}_t \mathsf{P} , \qquad t \in \{1,\dots,T\},
\end{equation}
where the \virg{prediction} operator $\mathsf{P}$ and the \virg{correction} operator $\mathsf{C}_t$ act on probability mass functions $\mu$ as:
\begin{equation} \label{eq:KandC}
\mathsf{P} \mu (x) \coloneq \sum\limits_{z \in \mathbb{X}^V} p(z,x) \mu({z}),
\qquad
\mathsf{C}_t \mu (x) \coloneq \frac{g(x,y_t) \mu({x})}{\sum\limits_{z \in \mathbb{X}^V} g(z,y_t) \mu({z})},\qquad x \in \mathbb{X}^V.
\end{equation}
If $\mu$ is considered a prior distribution, the operator $\mathsf{C}_t$ can be understood as applying a Bayes' rule update using the likelihood function $g(x,y_t)$.

Amongst various smoothing algorithms, we focus on the forward-filtering, backward-smoothing method presented by \cite{kitagawa1987non}, which involves a backward in time recursion performed after filtering:
\begin{equation} \label{smoothing}
\pi_{T|T} \coloneq \pi_T ,\qquad
\pi_{t|T} \coloneq \mathsf{R}_{\pi_t} \pi_{t+1|T}, \qquad t \in \{T-1,\ldots,0\},
\end{equation}
where for probability mass functions $\mu,\nu$ the operator $\mathsf{R}_{\nu}\mu$ is defined as:
\begin{equation*} \label{operatorR}
\mathsf{R}_{\nu}\mu (x) \coloneq  \sum\limits_{z \in \mathbb{X}^V} \frac{p(x,z) \nu({x})}{\sum_{\tilde{x} \in \mathbb{X}^V} p(\tilde{x},z) \nu(\tilde{x})} \mu(z),\qquad x \in \mathbb{X}^V.
\end{equation*}

\paragraph*{}
\cite{Ghahramani1997} showed that for FHMMs the complexity of  \eqref{filtering} and  \eqref{smoothing} together is  $\mathcal{O}(T M L^{M+1})$. The $L^{M}$ part of this complexity makes implementation prohibitively costly as the dimension $M$ grows.

\section{Approximate filtering and smoothing}
\label{sec:approxFS}

To introduce our approximate filtering and smoothing techniques --- called the Graph Filter and Graph Smoother --- consider the filtering and smoothing recursions, \eqref{filtering} and \eqref{smoothing}, and fix any partition $\mathcal{K}$ of $V$. Suppose that one has already obtained an approximation to $\pi_{t-1}$, call it $\tilde{\pi}_{t-1}$,   which factorizes with respect to $\mathcal{K}$. Then due to \eqref{eq:prior_factorization}, $\mathsf{P} \tilde{\pi}_{t-1}$, also factorizes with respect to $\mathcal{K}$. However $\mathsf{C}_t \mathsf{P} \tilde{\pi}_{t-1}$ does not factorize with respect to $\mathcal{K}$ in general. In Section \ref{subsec:approx_bayes} we shall define an approximation to the Bayes update operator $\mathsf{C}_t$,  denoted $\tilde{\mathsf{C}}_t^m$, where $m$ is a parameter, such that  $\tilde{\pi}_{t}\coloneq \tilde{\mathsf{C}}_t^m \mathsf{P} \tilde{\pi}_{t-1}$ does factorize with respect $\mathcal{K}$.

Once $(\tilde{\pi}_{t})_{t\in\{0,\ldots,T\}}$ have been computed in this manner, a sequence of approximate smoothing distributions $(\tilde{\pi}_{t
|T})_{t\in\{0,\ldots,T\}}$ will be obtained by setting $\tilde{\pi}_{T|T}\coloneq\tilde{\pi}_{T}$ and then $\tilde{\pi}_{t-1|T}\coloneq \mathsf{R}_{\tilde{\pi}_t}\tilde{\pi}_{t|T}$, for $t=T,T-1,\dots,0$.  Due to \eqref{eq:prior_factorization} and the fact that $(\tilde{\pi}_t)_{t\in\{0,\ldots,T\}}$ each factorizes with respect to $\mathcal{K}$,  it follows that  $(\tilde{\pi}_{t|T})_{t\in\{0,\ldots,T\}}$ also factorize with respect to $\mathcal{K}$.

The key ingredient in all of this is finding a way to approximate the action of the Bayes update operator $\mathsf{C}_t$ in an accurate but computationally inexpensive manner. Our next objective is to introduce the details of how we do so.

\subsection{Approximate Bayes updates via localization and factorization}\label{subsec:approx_bayes}
Let $d:(V\cup F)^2\to \mathbb{N}$ be the graph distance on $\mathcal{G}$, that is $d(w,w')$ is the number of edges in a shortest path between $w,w'$. Augmenting the definition of the neighborhood function \eqref{neighfunc}, define, for any $J \subseteq V$,
\begin{align*}
N_{v}^r(J) &\coloneq \{ v' \in V \text{ such that } \exists v \in J \text{ with }  d(v,v') \leq 2r+2\},\\
N_f^r(J) &\coloneq \{ f \in F \text{ such that } \exists v \in J \text{ with }  d(v,f) \leq 2r+1\}.
\end{align*}
The sets $N_{v}^r(J)$ and $N_f^r(J)$ will play an important role in our theoretical results, we need these two definitions due tot he bi-partite nature of the graph $\mathcal{G}$. 

For a given probability mass function $\mu$ on $\mathbb{X}^V$, a partition of $V$ denoted $\mathcal{K}$ and $m\geq0$ define:
\begin{align}
\tilde{\mathsf{C}}^{m,K}_t \mu(x^K) &\coloneq
\frac{\sum_{{z} \in \mathbb{X}^{V}: z^K=x^K} \prod_{f \in {N_f^m(K)}} g^f {(} z^{N{(f)}}, y_t{)} \mu(z)}{\sum_{z \in \mathbb{X}^V} \prod_{f \in {N_f^m(K)}} g^f {(}z^{N{(f)}}, y_t {)} \mu(z)}, \quad x^K \in \mathbb{X}^K, K \in \mathcal{K},\label{eq:localizfact}\\
\tilde{\mathsf{C}}^{m}_t \mu &\coloneq \bigotimes\limits_{K \in \mathcal{K}}\tilde{\mathsf{C}}^{m,K}_t \mu.\label{eq:productmarg}
\end{align}
Note the dependence of $\tilde{\mathsf{C}}^{m}_t$ on $\mathcal{K}$ is not shown in the notation. Here $m$ is key parameter which, as we shall see in more detail later, influences both the accuracy and computational cost of our approximate inference algorithms. 

To see the motivation for \eqref{eq:localizfact}-\eqref{eq:productmarg}, observe from the definition of the Bayes update operator \eqref{eq:KandC} and factorial likelihood function \eqref{eq:likelihood_factorization} that the marginal distribution of $\mathsf{C}_t\mu$ associated with some $K \in \mathcal{K}$ is given by:
\begin{equation}\label{eq:bayes_marginal}
(\mathsf{C}_t \mu)^K (x^K) \coloneq
\frac{\sum_{{z} \in \mathbb{X}^{V}: z^K=x^K} \prod_{f \in F} g^f {(} z^{N{(f)}}, y_t{)} \mu(z)}{\sum_{z \in \mathbb{X}^V} \prod_{f \in F} g^f {(}z^{N{(f)}}, y_t {)} \mu(z)}.
\end{equation}
The definition \eqref{eq:localizfact}-\eqref{eq:productmarg} thus embodies two ideas: \emph{localization}, in that $\tilde{\mathsf{C}}^{m,K}_t \mu$ is an approximation to the exact marginal  $(\mathsf{C}_t \mu)^K $ obtained by replacing the likelihood function $\prod_{f \in F} g^f {(} z^{N{(f)}}, y_t{)}$ in \eqref{eq:bayes_marginal} by the \virg{local-to-$K$} product $\prod_{f \in {N_f^m(K)}} g^f {(} z^{N{(f)}}, y_t{)}$; and \emph{factorization}, in that $\tilde{\mathsf{C}}^{m}_t \mu$  factorizes with respect to $\mathcal{K}$ by construction. Figure \ref{fig:factorgraphprob1} illustrates sub-graphs of $\mathcal{G}$ associated with each of the neighborhoods $N_f^m(K)$, $K\in\mathcal{K}$.

It is important to note that the factorization idea alone is not enough for our purposes: computing the marginal distribution $(\mathsf{C}_t \mu)^K$ has a cost which is exponential in $M$ in general for likelihoods of the form \eqref{eq:likelihood_factorization}, even when $\mu$ factorizes with respect to $\mathcal{K}$. So taking $\bigotimes_{K \in \mathcal{K}}(\mathsf{C}_t \mu)^{K}$ as an approximation to $\mathsf{C}_t \mu$ would offer no computational advantage. This distinguishes our setup from that of \citep{rebeschini2015can,finke2017approximate} as discussed in Section \ref{sec:intro}, and is the reason we introduce localization through the parameter $m$.

\begin{figure}[httb!]
\centering
\resizebox{0.75\textwidth}{0.40\textheight}{%
\begin{tikzpicture}[on grid ,state/.style ={ circle ,top color =white , draw , text=blue,font=\bfseries , minimum width =1 cm}]


\node[state] (Xt1) at (0,0) {$X^{1}$};
\node[state] (Xt2) at (2,0) {$X^{2}$};
\node[state] (Xt3) at (4,0) {$X^{3}$};
\node[state] (Xt4) at (6,0) {$X^{4}$};
\node[state] (Xt5) at (8,0) {$X^{5}$};

\node[state] (sXt1) at (-3,-4.6) {$X^{1}$};
\node[state] (sXt2) at (-1,-4.6) {$X^{2}$};
\node[state] (sXt3) at (1,-4.6) {$X^{3}$};
\node[state, diamond, text=red] (sf1) at (-3,-4.6-1.5) {$f_1$};
\node[state, diamond, text=red] (sf2) at (1,-4.6-1.5) {$f_2$};

\path (sXt1) edge [red] node {} (sf1);
\path (sXt2) edge [red] node {} (sf1);
\path (sXt1) edge [red] node {} (sf2);
\path (sXt2) edge [red] node {} (sf2);
\path (sXt3) edge [red] node {} (sf2);

\node[state] (ssXt1) at (-3,2.25+2) {$X^{1}$};
\node[state] (ssXt2) at (-1,2.25+2) {$X^{2}$};
\node[state] (ssXt3) at (1,2.25+2) {$X^{3}$};
\node[state] (ssXt4) at (3,2.25+2) {$X^{4}$};
\node[state, diamond, text=red] (ssf1) at (-3,0.5+2.25) {$f_1$};
\node[state, diamond, text=red] (ssf2) at (0,0.5+2.25) {$f_2$};
\node[state, diamond, text=red] (ssf3) at (3,0.5+2.25) {$f_3$};

\path (ssXt1) edge [red] node {} (ssf1);
\path (ssXt2) edge [red] node {} (ssf1);
\path (ssXt1) edge [red] node {} (ssf2);
\path (ssXt2) edge [red] node {} (ssf2);
\path (ssXt3) edge [red] node {} (ssf2);
\path (ssXt3) edge [red] node {} (ssf3);
\path (ssXt4) edge [red] node {} (ssf3);

\node[state] (sssXt3) at (8,-4.6) {$X^{3}$};
\node[state] (sssXt4) at (10,-4.6) {$X^{4}$};
\node[state] (sssXt5) at (12,-4.6) {$X^{5}$};
\node[state, diamond, text=red] (sssf2) at (8,-4.75-1.35) {$f_2$};
\node[state, diamond, text=red] (sssf3) at (12,-4.75-1.35) {$f_3$};
\path (sssXt3) edge [red] node {} (sssf2);
\path (sssXt4) edge [red] node {} (sssf3);
\path (sssXt4) edge [red] node {} (sssf2);
\path (sssXt5) edge [red] node {} (sssf3);

\node[state] (ssssXt4) at (9,4.25) {$X^{4}$};
\node[state] (ssssXt5) at (11,4.25) {$X^{5}$};
\node[state, diamond, text=red] (ssssf3) at (10,4.25-1.5) {$f_4$};
\path (ssssXt4) edge [red] node {} (ssssf3);
\path (ssssXt5) edge [red] node {} (ssssf3);

\node[state, diamond, text=red] (f1) at (0,-2) {$f_1$};
\node[state, diamond, text=red] (f2) at (2.75,-2) {$f_2$};
\node[state, diamond, text=red] (f3) at (5.25,-2) {$f_3$};
\node[state, diamond, text=red] (f4) at (8,-2) {$f_4$};

\path (Xt1) edge [red] node {} (f1);
\path (Xt2) edge [red] node {} (f1);
\path (Xt1) edge [red] node {} (f2);
\path (Xt2) edge [red] node {} (f2);
\path (Xt3) edge [red] node {} (f2);
\path (Xt3) edge [red] node {} (f3);
\path (Xt4) edge [red] node {} (f3);
\path (Xt4) edge [red] node {} (f4);
\path (Xt5) edge [red] node {} (f4);

\draw[blue,thick,dotted] (0-0.75,0+0.75)  rectangle (0+2.75,0-0.75);
\draw[blue,thick,dotted] (4-0.75,0+0.75)  rectangle (4+0.75,0-0.75);
\draw[blue,thick,dotted] (6-0.75,0+0.75)  rectangle (6+0.75,0-0.75);
\draw[blue,thick,dotted] (8-0.75,0+0.75)  rectangle (8+0.75,0-0.75);

\draw[green,thick,dotted] (0-1,0+1)  rectangle (0+9,0-2.85);

\draw[green,thick,dotted] (-3-0.75,0-4+0.15)  rectangle (+1+0.75,0-7+0.15);
\draw[green,thick,dotted] (-3-0.75,0+2)  rectangle (3+0.75,0+5);
\draw[green,thick,dotted] (8-0.75,0-4+0.15)  rectangle (12+0.75,0-7+0.15);
\draw[green,thick,dotted] (+9-0.75,0+2)  rectangle (+11+0.75,0+5);


\path (0-0.75,0.05) edge [blue,thick,dotted,-latex, bend right=30] node {} (-2,0-3.9);
\path (3.9,1-0.2) edge [blue,thick,dotted,-latex, bend right=0] node {} (2,2);
\path (6,-0.75) edge [blue,thick,dotted,-latex, bend right=0] node {} (8,0-3.8);
\path (9.5-0.75,0) edge [blue,thick,dotted,-latex, bend right=30] node {} (10.15,2);

\end{tikzpicture}
}
\caption{An example of sub-graphs associated with the neighbourhoods $N_f^m(K)$, $K\in\mathcal{K}$, when $V= \{1,2,3,4,5\}$, $F=\{f_1,f_2,f_3,f_4\}$, $\mathcal{K}=\{\{1,2\},\{3\},\{4\},\{5\}\}$ and $m=0$.} \label{fig:factorgraphprob1}
\end{figure}

More detailed consideration of the complexity of computing $\tilde{\mathsf{C}}^{m}_t \mu$ is given after our first main result, Proposition \ref{prop:goodmarg}, which quantifies the approximation error associated with $\tilde{\mathsf{C}}^{m}_t$ and is one of the building blocks in the overall analysis of our approximate filtering and smoothing method.

In order  to state Proposition \ref{prop:goodmarg} we need some further definitions.  Firstly let us introduce the following attributes of the factor graph $\mathcal{G}$.
\begin{align}
d(J,J') &\coloneq \min\limits_{w \in J} \min\limits_{w' \in J'} d(w,w'),\quad J,J' \subseteq V \cup F,\label{eq:graph_quants1}\\
n_K &\coloneq \frac{1}{2} \max_{ v \in V} d(K,v) , \quad K\in\mathcal{K},\\
n &\coloneq \max_{K \in \mathcal{K}} n_K ,\\
\Upsilon  & \coloneq \max\limits_{v \in V}  \mathbf{card}(N(v)),\\
\Upsilon^{(2)}&  \coloneq \max\limits_{v \in V}  \mathbf{card}(N^0_v(v)), \\
\tilde{\Upsilon}& \coloneq\max\limits_{v,v' \in V}  \mathbf{card}(N(v) \cap N(v')). \label{eq:graph_quants2}
\end{align}
Note the dependence of $n$ on $\mathcal{K}$ is not shown in the notation.

Secondly,  given a probability mass function $\mu$ on $\mathbb{X}^V$ and a random variable $X \sim \mu$, we shall denote by $\mu^v_x$ the conditional distribution of $X^v$ given $\{X^{V \setminus v}=x^{V \setminus v}\}$,   and define
\begin{align*}
C^{\mu}_{v,v'}&\coloneq \frac{1}{2} \sup_{x,z \in \mathbb{X}^V:x^{V \setminus v'} = z^{V \setminus v'}} \norm{ \mu^v_x- \mu^v_z }, \quad v,v' \in V,\\
{\mbox{Corr}}(\mu,\beta) &\coloneq \max_{v \in V} \sum_{v' \in V} e^{\beta d(v,v')}{C}^{\mu}_{v,v'},
\end{align*}
where $\beta>0$ is a given constant.

\begin{proposition}\label{prop:goodmarg}
Fix any partition $\mathcal{K}$ of $V$ and any $t\in\{1,\ldots,T\}$. Suppose there exists $\kappa \in (0,1)$ such that:
\begin{equation} \label{eq:Prop1hyp_g}
\kappa \leq g^f \left ( {x}^{N(f)},y_t \right ) \leq \frac{1}{\kappa}, \quad \forall x \in \mathbb{X}^V, f\in F.
\end{equation}
Assume that for a given probability mass function $\mu$ on $\mathbb{X}^V$ there exists $\beta>0$ such that:
\begin{equation} \label{eq:Prop1hyp_Corr}
2 \kappa^{-2 {\Upsilon}} {\mbox{Corr}}(\mu,\beta)+ e^{2 \beta} \Upsilon^{(2)} \left ( 1- \kappa^{2 \tilde{\Upsilon}} \right ) \leq \frac{1}{2}.
\end{equation}
Then for any $K \in \mathcal{K}$, $J \subseteq K$ and $m \in \{0,\dots,n\}$,
\begin{equation}
\norm{\mathsf{C}_t \mu-\tilde{\mathsf{C}}^m_t \mu}_J \leq
4 e^{-\beta} \left ( 1-\kappa^{b(m, \mathcal{K})}\right ) \mathbf{card}(J) e^{-\beta
 m},\label{eq:Prop1_bound}
\end{equation}
where $ b(m, \mathcal{K}) \coloneq 2 \max_{K \in \mathcal{K}}\max_{v \notin { N^{m-1}_v(K)}} \{\mathbf{card}(N(v))\}$, with the convention that the maximum over an empty set is zero.
\end{proposition}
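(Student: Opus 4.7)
The plan is to reduce the LTV comparison to one between two genuine probability measures on $\mathbb{X}^V$ and then apply a Dobrushin comparison theorem in the spirit of \cite{rebeschini2015can}. Because $\tilde{\mathsf{C}}^m_t\mu$ factorizes with respect to $\mathcal{K}$ and $J\subseteq K$, its $J$-marginal equals the $J$-marginal of the single-block quantity $\tilde{\mathsf{C}}^{m,K}_t\mu$; by \eqref{eq:localizfact} this coincides with the $J$-marginal of the measure $\nu^{(K)}$ on $\mathbb{X}^V$ defined by $\nu^{(K)}(z)\propto \mu(z)\prod_{f\in N_f^m(K)}g^f(z^{N(f)},y_t)$. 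Hence it suffices to bound $\|\mathsf{C}_t\mu-\nu^{(K)}\|_J$, where these two measures differ only through which likelihood factors multiply the shared prior $\mu$, namely all of $F$ for $\mathsf{C}_t\mu$ versus the local sub-family $N_f^m(K)$ for $\nu^{(K)}$.

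Viewing both measures as Gibbs measures, their single-site conditional at a site $v$ given $x^{V\setminus v}$ is proportional to $\mu^v_x(\cdot)\prod_{f\in N(v)}g^f$ for $\mathsf{C}_t\mu$ and to $\mu^v_x(\cdot)\prod_{f\in N(v)\cap N_f^m(K)}g^f$ for $\nu^{(K)}$. Whenever $v\in N_v^{m-1}(K)$, i.e.\ $d(v,K)\le 2m$, any $f\in N(v)$ satisfies $d(f,K)\le 2m+1$ and therefore lies in $N_f^m(K)$, so the two conditionals coincide and the local perturbation $b_v\coloneq \sup_x\|\gamma^v_{\mathsf{C}_t\mu}(\cdot\mid x)-\gamma^v_{\nu^{(K)}}(\cdot\mid x)\|$ vanishes. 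For $v\notin N_v^{m-1}(K)$ the two conditionals differ only by a multiplicative product of at most $\mathbf{card}(N(v))$ factors, each bounded in $[\kappa,1/\kappa]$ by \eqref{eq:Prop1hyp_g}; the elementary observation that multiplying a probability mass function by a factor in $[\alpha,\alpha^{-1}]$ changes its total variation by at most $1-\alpha^2$ then yields $b_v\le 1-\kappa^{b(m,\mathcal{K})}$.

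Applying a Dobrushin comparison theorem produces a bound of the form
\[
\|\mathsf{C}_t\mu-\nu^{(K)}\|_J \;\le\; \sum_{v_0\in J}\sum_{v\in V} D(v_0,v)\,b_v,
\]
where $D=\sum_{k\ge 0}C^k$ is built from the Dobrushin interaction matrix $C$ of $\mathsf{C}_t\mu$. Hypothesis \eqref{eq:Prop1hyp_Corr} is calibrated so that $C$ has weighted operator norm at most $1/2$ with the exponential weights $e^{\beta d(\cdot,\cdot)}$: the term $2\kappa^{-2\Upsilon}{Corr}(\mu,\beta)$ passes the assumed correlation decay of the prior $\mu$ through the likelihood bounds (each coordinate is touched by at most $\Upsilon$ factors, each in $[\kappa,1/\kappa]$), while $e^{2\beta}\Upsilon^{(2)}(1-\kappa^{2\tilde\Upsilon})$ controls the new inter-site coupling introduced by the factors themselves. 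The resulting Neumann series then yields $D(v_0,v)\lesssim e^{-\beta d(v_0,v)}$. Combining this with the vanishing of $b_v$ on $N_v^{m-1}(K)$, the uniform bound $b_v\le 1-\kappa^{b(m,\mathcal{K})}$ elsewhere, and the distance lower bound $d(v_0,v)\ge d(v,K)\ge 2m+1$ for $v_0\in K$, $v\notin N_v^{m-1}(K)$, a geometric sum over $v$ and a union bound over $v_0\in J$ deliver the stated $4e^{-\beta}(1-\kappa^{b(m,\mathcal{K})})\mathbf{card}(J)e^{-\beta m}$.

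The main obstacle is the contractivity in Step 3: translating the correlation-decay assumption on $\mu$ into a bound on the weighted Dobrushin matrix of the posterior $\mathsf{C}_t\mu$ is what dictates the specific combination of $\kappa^{-2\Upsilon}$, $\Upsilon^{(2)}$ and $\tilde\Upsilon$ appearing in \eqref{eq:Prop1hyp_Corr}, and it requires a careful accounting of how a single-site perturbation propagates simultaneously through $\mu^v_x$ and through the overlapping likelihood factors that share coordinates with $v$. Once that contractivity is in hand, the rest of the argument is essentially geometric bookkeeping.
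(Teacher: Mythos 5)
Your proposal follows essentially the same route as the paper's own proof (Proposition \ref{prop:err=} in the appendix): reduce to the single block $K$, compare the two Gibbs-type measures that share the prior $\mu$ but carry the full versus the localized family of likelihood factors, apply the Dobrushin Comparison Theorem with the interaction-matrix bound supplied by \eqref{eq:Prop1hyp_Corr}, observe that the single-site perturbations $b_v$ vanish for $v \in N^{m-1}_v(K)$ because then $N(v) \subseteq N^m_f(K)$, and convert the exponential decay of $D$ plus the separation $d(v, K) > 2m$ into the $e^{-\beta m}$ factor. The only differences are cosmetic constants: you use the sharper elementary bound $1-\kappa^{2\mathbf{card}(N(v))}$ for $b_v$ where the paper invokes its Lemma \ref{lemma:4.2} and gets an extra factor of $2$, and your distance lower bound $2m+1$ is tighter than the paper's $m$ — both of which only improve, not endanger, the stated inequality.
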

The proof of Proposition \ref{prop:goodmarg} is given in appendix A. The term ${\mbox{Corr}}(\mu,\beta)$ quantifies the strength of dependence across the coordinates of $X=(X^v)_{v\in{V}}\sim\mu$ . We say that $\mu$ satisfies the decay of correlation if it exists a $\beta$ such that ${\mbox{Corr}}(\mu,\beta)$ is bounded above. The hypothesis \eqref{eq:Prop1hyp_Corr} places a combined constraint on this dependence, the constant $\kappa$ which in \eqref{eq:Prop1hyp_g} controls the oscillation of the likelihood function factors $g^f({x}^{N(f)},y_t)$, and the graph attributes $\Upsilon$,  $\Upsilon^{(2)}$ and $\tilde{\Upsilon}$.

Turning to the bound \eqref{eq:Prop1_bound}, let us examine its dependence on the partition $\mathcal{K}$ and the parameter $m$. The quantity $b(m,\mathcal{K})$ is non-increasing with $m$. In practice,   $b(m,\mathcal{K})$  will often be decreasing with $m$,  and is always zero when $m=n$ since then $N_v^{m-1}(K)$ is $V$. Also $b(m,\mathcal{K})$ will often decrease as $\mathcal{K}$ becomes more coarse and in the extreme case of the trivial partition $\mathcal{K}=\{V\}$, the constant $b(m,\mathcal{K})$ is always zero, because $N_v^{m-1}(K)$ is again $V$, and so $v \notin V$ is equivalent to $v \in \emptyset$.  Combined with the $e^{-\beta m}$ term, this means $\norm{\mathsf{C}_t \mu-\tilde{\mathsf{C}}^m_t \mu}_J $ can be made small by choosing the  partition $\mathcal{K}$ to be suitably coarse and $m$ to be suitably large.

It is important to note that ${\mbox{Corr}}(\mu,\beta)$, $\kappa$ and $\Upsilon$ appearing in the hypotheses \eqref{eq:Prop1hyp_g} and \eqref{eq:Prop1hyp_Corr}, and  the quantities on the right hand side of \eqref{eq:Prop1_bound} do not necessarily have any dependence on $M$, the overall dimension of the state-space. For instance, when $\mu=\otimes_{v\in V} \mu^v$ then ${\mbox{Corr}}(\mu,\beta)=0$ and one can easily construct families of FHMMs of increasing dimension in which $\kappa$, $\Upsilon$, $\Upsilon^{(2)}$, $\tilde{\Upsilon}$, $b(m,\mathcal{K})$ are independent of $M$:  consider the simple case where the factor graph is a chain as shown in Figure \ref{fig:fgsimple}, and the dimension of the model is increased by adding $f_5$ and $X^{(6)}$ then $f_6$ and $X^{(7)}$ as shown by the dashed lines.  In this situation, for any $v \in V$ the cardinality of $N(v)$ and $N^0_v(v)$ remain unchanged as the dimension of the model increases.

\begin{figure}[httb!]
\centering
\resizebox{0.8\textwidth}{0.125\textheight}{%
\begin{tikzpicture}[on grid ,state/.style ={ circle ,top color =white , draw , text=blue,font=\bfseries , minimum width =1 cm}]

\node[state, densely dotted] (Xt7) at (5.25,0) {$X_t^{7}$};
\node[state, densely dotted] (Xt6) at (2.75,0) {$X_t^{6}$};
\node[state] (Xt1) at (0,0) {$X_t^{5}$};
\node[state] (Xt2) at (-2,0) {$X_t^{4}$};
\node[state] (Xt3) at (-4,0) {$X_t^{3}$};
\node[state] (Xt4) at (-6,0) {$X_t^{2}$};
\node[state] (Xt5) at (-8,0) {$X_t^{1}$};

\node[state, diamond, text=red, densely dotted] (f6) at (5.25,2) {$f_6$};
\node[state, diamond, text=red, densely dotted] (f5) at (2.75,2) {$f_5$};
\node[state, diamond, text=red] (f1) at (0,2) {$f_4$};
\node[state, diamond, text=red] (f2) at (-2.75,2) {$f_3$};
\node[state, diamond, text=red] (f3) at (-5.25,2) {$f_2$};
\node[state, diamond, text=red] (f4) at (-8,2) {$f_1$};

\path (Xt1) edge [red] node {} (f1);
\path (Xt2) edge [red] node {} (f1);
\path (Xt2) edge [red] node {} (f2);
\path (Xt3) edge [red] node {} (f2);
\path (Xt3) edge [red] node {} (f3);
\path (Xt4) edge [red] node {} (f3);
\path (Xt4) edge [red] node {} (f4);
\path (Xt5) edge [red] node {} (f4);
\path (Xt1) edge [red, densely dotted] node {} (f5);
\path (Xt6) edge [red, densely dotted] node {} (f5);
\path (Xt6) edge [red, densely dotted] node {} (f6);
\path (Xt7) edge [red, densely dotted] node {} (f6);

\end{tikzpicture}
}
\caption{Solid lines indicate a chain factor graph with 4 likelihood factors and $V=\{1,2,3,4,5\}$, hence $M=5$. Dashed lines indicate extension to $M=6,7$ by adding $f_5$ and $X^{(6)}$ then $f_6$ and $X^{(7)}$.} \label{fig:fgsimple}
\end{figure}

\begin{algorithm}[h]
	\caption{Approximate Bayes update} \label{alg:margcon}
	\begin{algorithmic}[1]
		\Require $\mathcal{K},(N^m_f(K))_{K \in \mathcal{K}}, (N^m_v(K))_{K \in \mathcal{K}}, (\mu^K)_{K \in \mathcal{K}}, (g^f(\cdot, y))_{f \in F}$
		\For{$K \in \mathcal{K}$}
			\State{$\hat{K} \leftarrow \{K' \in \mathcal{K}: K' \cap N^m_v(K) \neq \emptyset\}$}
			\For{$x \in \mathbb{X}^{\hat{K}}$}
				\State{ $\hat{\mu}(x) \leftarrow \prod_{K' \in \hat{K}} \mu^{K'}(x^{K'})$}
				\For{$f \in N^m_f(K)$}
					\State{${\hat{\mu}}(x)  \leftarrow \hat{\mu}(x) \cdot g^f \left (x^{N{(f)}}, y \right )$}
				\EndFor
			\EndFor			
			\State{\bf{Normalize ${\hat{\mu}}$ to a probability mass function on} $\mathbb{X}^{\hat{K}}$ }
			\State{\bf{Marginalize out components $\hat{K}\setminus K$:} ${\tilde{\mu}^K} \leftarrow {\hat{\mu}^K}$}
		\EndFor
		\Return{$(\tilde{\mu}^{K})_{K \in \mathcal{K}}$}
	\end{algorithmic}
\end{algorithm}

Algorithm \ref{alg:margcon} shows the steps involved in computing $\tilde{\mathsf{C}}_t^m \mu$ in the case that $\mu$ factorizes with respect to $\mathcal{K}$. To simplify considerations of the computational cost of Algorithm \ref{alg:margcon}, let us suppose that for each  $K \in \mathcal{K}$  there exists a collection of elements in $\mathcal{K}$ that is a partition of $N^m_v(K)$. This is a typical feature of regular graphs such as lattices. In this case the complexity of  Algorithm \ref{alg:margcon} is readily found to be:
$$
\mathcal{O}\left( \mathbf{card}(\mathcal{K}) \max\limits_{K \in \mathcal{K}}\mathbf{card}\left(N^m_f(K)\right) L^{ \max\limits_{K \in \mathcal{K}}\mathbf{card}(N^m_v(K)) }\right).
$$
Crucially the exponent of $L$ in this cost, which is proportional to $\max_{K \in \mathcal{K}}\mathbf{card}(N^m_v(K))$, does not necessarily grow with the overall dimension $M$; recall that $N^m_v(K)$ is, in words, the set of vertices belonging to $V$ which are within $2m+2$ graph distance of the set $K$. Thus $\mathbf{card}(N^m_v(K))$ captures the density of edges in the graph in the neighborhood (defined by $m$) of $K$. Clearly this is a local rather than global characteristic of the graph. Taking the $\max_{K \in \mathcal{K}}$ of these cardinalities is a simple upper-bound on these local quantities across the graph. In the case of regular graphs such as lattices and in particular the chain example in Figure \ref{fig:fgsimple}, the term $\max_{K \in \mathcal{K}}\mathbf{card}(N^m_v(K))$ is independent of the total number of vertices in the graph.   These complexity considerations for  Algorithm \ref{alg:margcon} suggest that the overall cost of a filtering and smoothing method built around the approximate Bayes update operator $\tilde{\mathsf{C}}_t^m$ may avoid the exponential-in-$M$ factor in the cost $\mathcal{O}(T M L^{M+1})$ of exact filtering and smoothing for FHMMs.


\subsection{Graph Filter}
As an approximation to the operator $\mathsf{F}_t$ introduced in Section \ref{subsec:exact_filt_smooth} we now define:
$$
\tilde{\mathsf{F}}^m_t \coloneq \tilde{\mathsf{C}}^m_t \mathsf{P},
$$
where the dependence of $\tilde{\mathsf{F}}^m_t$ on $\mathcal{K}$, inherited from $\tilde{\mathsf{C}}^m_t$, is not shown in the notation. The approximate filtering distributions are then defined by the recursion:
\begin{equation} \label{newfiltering}
\tilde{\pi}_{0}\coloneq \mu_0,\qquad
\tilde{\pi}_{t}\coloneq \tilde{\mathsf{F}}^m_t \tilde{\pi}_{t-1}, \quad t \in \{1,\dots,T\}
\end{equation}

Our next result, Theorem \ref{th:goodapprox},  builds from Proposition \ref{prop:goodmarg} and quantifies the approximation error associated with $(\tilde{\pi}_t)_{t\in\{0,\ldots,T\}}$. In order to state it we need to  introduce, further to \eqref{eq:graph_quants1}-\eqref{eq:graph_quants2}, the definitions for $J \subseteq V$,
\begin{align}
\widetilde{J} &\coloneq  \{v \in J: \forall f \in N(v),\, N(f) \subseteq J \},\label{eq:J_tilde_notation}\\
\partial J &\coloneq J \setminus \widetilde{J},\\
\partial N(J) &\coloneq \{f \in N(J): N(f) \cap V \setminus J \neq \emptyset\}.
\end{align}

Moreover, given a probability mass function $\mu$ on $\mathbb{X}^V$, transition probabilities $p$ on $\mathbb{X}^V$ and two random variables such that $X \sim \mu$ and $Z|X=x \sim p(x,\bullet)$ for $x \in \mathbb{X}^V$, we shall denote by $\mu_{x,{z}}^{v}$ the conditional distribution of $X^v$ given $\{X^{V \setminus v} = X^{V \setminus v}, Z=z\}$, and define
\begin{align}
& \tilde{C}_{v,v'}^{\mu} \coloneqq \frac{1}{2} \sup\limits_{{z} \in \mathbb{X}^V}  {\sup\limits_{\substack{x,\hat{x} \in \mathbb{X}^{V}:\\ x^{V \setminus v'}=\hat{x}^{V \setminus v'}}} \norm{\mu_{x,{z}}^{v}-\mu_{\hat{x},{z}}^{v}}}, \quad v,v' \in V	\\
& \widetilde{\mbox{Corr}}(\mu, \beta) \coloneqq \max_{v \in V} \sum\limits_{v' \in V} e^{\beta d(v,v')}C_{v,v'}^{\mu},
\end{align}
where $\beta>0$ is a given constant. We note that if the components of $X$ are independent, i.e. $\mu = \bigotimes_{v \in V} \mu^v$, then $\tilde{C}_{v,v'}^{\mu} =0$ for any $v \neq v'$ and so $\widetilde{\mbox{Corr}}(\mu, \beta)=0$ for any $\beta>0$.

\begin{theorem} \label{th:goodapprox}
Fix any collection of observations $\{y_1,\dots,y_T\}$ and any partition $\mathcal{K}$ of $V$. There exists a region $\mathcal{R}_0 \subseteq (0,1)^3$  depending only on $\tilde{\Upsilon}$,$\Upsilon$ and $\Upsilon^{(2)}$, such that if, for given $(\epsilon_{-},\epsilon_{+},\kappa) \in \mathcal{R}_0$,
$$
\epsilon_{-} \leq p^v(x^v,z^v)  \leq {\epsilon_{+}},
\quad
\text{and}
\quad
\kappa \leq g^f\left ({x}^{N{(f)}},y_t \right ) \leq \frac{1}{\kappa},
$$
for all $x, z \in\mathbb{X}^V, v \in V, f \in F, t \in \{1,\dots,T\}$, then for $\beta >0$ small enough depending only on $\tilde{\Upsilon},\Upsilon,\Upsilon^{(2)}, \epsilon_{-},\epsilon_{+}$ and $\kappa$, we have that for any $\mu_0$ satisfying: 
\begin{equation}
\widetilde{\mbox{Corr}}(\mu_0,\beta) \leq 2 e^{-\beta} \left ( 1 - \frac{\epsilon_{-}}{\epsilon_{+}} \right ) + 2 e^{2 \beta} \Upsilon^{(2)} \left ( 1 - \kappa^{2 \tilde{\Upsilon}} \right )
\end{equation}
and for any $K \in \mathcal{K}$, $J \subseteq K$ and $m \in \{0,\dots,n\}$:
\begin{equation*}
\begin{split}
\norm{\pi_t- \tilde{\pi}_t}_J
&\leq
\alpha_1(\beta)\left ( 1-\kappa^{a(\mathcal{K})} \right )\mathbf{card}(J)+ \gamma_1(\beta) \left ( 1-\kappa^{b(\mathcal{K},m)} \right ) \mathbf{card}(J) e^{-\beta m}, \quad \forall t \in \{1,\dots,T\},
\end{split}
\end{equation*}
where $\pi_t, \tilde{\pi}_t$ are given by \eqref{filtering} and \eqref{newfiltering} with initial state distribution $\mu_0$; $\alpha_1(\beta), \gamma_1(\beta)$ are constants depending only on $\beta$, and
\begin{align*}
a(\mathcal{K}) &\coloneq 2 \max_{K \in \mathcal{K}} \max_{v \in \partial K} \mathbf{card}(N(v) \cap \partial N(K)),\\
b(m,\mathcal{K}) & \coloneq 2 \max_{K \in \mathcal{K}}\max_{v \notin { N^{m-1}_v(K)}} \mathbf{card}(N(v)),
\end{align*}
with the convention that the maximum over an empty set is zero.
\end{theorem}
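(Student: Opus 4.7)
The plan is to combine a telescoping decomposition with Proposition \ref{prop:goodmarg} and a local-stability analysis of the exact filter. Define intermediate mass functions $\eta^{(s)}_s \coloneq \tilde{\pi}_s$ and $\eta^{(s)}_r \coloneq \mathsf{F}_r \eta^{(s)}_{r-1}$ for $r > s$, so that $\eta^{(0)}_t = \pi_t$ and $\eta^{(t)}_t = \tilde{\pi}_t$, and the telescoping identity
\begin{equation*}
\pi_t - \tilde{\pi}_t = \sum_{s=1}^{t} \left( \eta^{(s-1)}_t - \eta^{(s)}_t \right)
\end{equation*}
writes the total error as a sum of exact-filter trajectories initialised, respectively, at $\mathsf{F}_s\tilde{\pi}_{s-1}$ and $\tilde{\mathsf{F}}^m_s\tilde{\pi}_{s-1}$. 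At time $s$ these two initial distributions differ only by $\mathsf{C}_s\mathsf{P}\tilde{\pi}_{s-1} - \tilde{\mathsf{C}}^m_s\mathsf{P}\tilde{\pi}_{s-1}$, the very quantity controlled by Proposition \ref{prop:goodmarg}.

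The core technical step I would prove is a local stability result for the exact filter: there should exist a matrix $D^{(r)}_{J,K'}$, with entries decaying both in $r$ and in the factor-graph distance $d(J,K')$, such that
\begin{equation*}
\|\mathsf{F}_{s+1}\cdots\mathsf{F}_t \mu - \mathsf{F}_{s+1}\cdots\mathsf{F}_t \nu\|_J \leq \sum_{K' \in \mathcal{K}} D^{(t-s)}_{J,K'}\, \|\mu - \nu\|_{K'}.
\end{equation*}
This is a Dobrushin-comparison bound in the spirit of \cite{rebeschini2015can}: under $\epsilon_- \leq p^v \leq \epsilon_+$ and $\kappa \leq g^f \leq \kappa^{-1}$, with $(\epsilon_-,\epsilon_+,\kappa) \in \mathcal{R}_0$ chosen small enough relative to $\tilde{\Upsilon},\Upsilon,\Upsilon^{(2)}$ and with $\beta$ small enough, a weighted Dobrushin matrix for one filter step should have spectral radius strictly below one in a norm with weight $e^{\beta d(\cdot,\cdot)}$, yielding geometric contraction in $r=t-s$. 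The cross-block entries of $D^{(1)}$ inherit a factor $(1-\kappa^{a(\mathcal{K})})$ from the Bayes update, because propagating a perturbation localised in one block of $\mathcal{K}$ into another block can only happen through likelihood factors in $\partial N(\cdot)$, whose per-boundary-vertex count is exactly $a(\mathcal{K})/2$; this is the source of the first term in the theorem.

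Two further ingredients are required. First, the hypothesis \eqref{eq:Prop1hyp_Corr} of Proposition \ref{prop:goodmarg} must be verified for $\mathsf{P}\tilde{\pi}_{s-1}$ uniformly in $s$, which I would do by induction: the lower bound $\epsilon_-$ shows $\mathsf{P}$ tames correlations by convolution against a strictly positive product kernel, and $\tilde{\mathsf{C}}^m_s$ produces a $\mathcal{K}$-factorized measure whose within-block correlations can themselves be bounded using the same Dobrushin techniques, closing the induction provided $\mathcal{R}_0$ is chosen small enough. Second, substituting Proposition \ref{prop:goodmarg} and the stability lemma into the telescoping sum yields, for each $s$, a contribution bounded by $\rho^{t-s}\bigl[C_1(\beta)(1-\kappa^{a(\mathcal{K})})\mathbf{card}(J) + C_2(\beta)(1-\kappa^{b(m,\mathcal{K})})\mathbf{card}(J)e^{-\beta m}\bigr]$ for some $\rho < 1$; the geometric series in $t-s$ sums to constants $\alpha_1(\beta), \gamma_1(\beta)$, producing the stated form.

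The main obstacle will be constructing the local stability bound with the correct cross-block coefficient $a(\mathcal{K})$. Generic filter stability in full TV is insufficient, since localised perturbations can spread across block boundaries via the non-product likelihood; quantitatively controlling this spread requires a Dobrushin matrix whose cross-block entries are governed by precisely the combinatorial boundary quantity appearing in $a(\mathcal{K})$, and whose weighted operator norm can be kept below one jointly with the correlation threshold appearing in \eqref{eq:Prop1hyp_Corr}. A subsidiary delicate point is closing the correlation induction, because it feeds Proposition \ref{prop:goodmarg}'s output back into its own hypothesis, and this self-consistency is what ultimately constrains the admissible region $\mathcal{R}_0$ and the choice of $\beta$.
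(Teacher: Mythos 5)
Your overall skeleton---telescoping over the time at which the approximate update is first substituted, a weighted Dobrushin contraction for the exact filter, a per-step error bound, and an induction to propagate the decay-of-correlations hypothesis---matches the paper's strategy (Proposition \ref{filterstab}, Propositions \ref{prop:err>}--\ref{prop:err=}, Corollary \ref{coroldecay}). However, there is a genuine gap in the norm in which you formulate the stability estimate, and it causes your accounting of the $\left(1-\kappa^{a(\mathcal{K})}\right)$ term to fail. You propose $\|\mathsf{F}_{s+1}\cdots\mathsf{F}_t\mu-\mathsf{F}_{s+1}\cdots\mathsf{F}_t\nu\|_J\leq\sum_{K'}D^{(t-s)}_{J,K'}\|\mu-\nu\|_{K'}$, i.e.\ stability with respect to the block-marginal LTV of the initial perturbation, and you feed in Proposition \ref{prop:goodmarg} to control $\|\mathsf{F}_s\tilde{\pi}_{s-1}-\tilde{\mathsf{F}}^m_s\tilde{\pi}_{s-1}\|_{K'}$. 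But Proposition \ref{prop:goodmarg} contains no $a(\mathcal{K})$ term, and in fact its bound vanishes when $m=n$: the localized marginal on each block is then exact, yet $\tilde{\mathsf{C}}^n_s$ still replaces the joint by the product of its block marginals, an error that is invisible to every $\|\cdot\|_{K'}$ at time $s$ but degrades the marginals at later times once the correlated likelihood re-couples the blocks. A stability inequality of the form you posit therefore cannot hold with constants that would recover the theorem (it would wrongly predict zero error at $m=n$), and no contraction matrix applied to an $a(\mathcal{K})$-free per-step bound can manufacture the first term. This is precisely why the paper's Proposition \ref{filterstab} measures the initial discrepancy in the stronger quantity $\sup_{x,z}\|\mu^{v'}_{x,z}-\nu^{v'}_{x,z}\|$ (one-step-forward conditional distributions), which is what the Dobrushin Comparison Theorem's $b_j$ terms naturally produce; Proposition \ref{prop:err>} then shows these conditionals agree exactly for interior vertices $v\in\widetilde{K}$ and differ by $2\left(1-\kappa^{a(\mathcal{K})}\right)+4e^{-\beta m}\left(1-\kappa^{b(m,\mathcal{K})}\right)$ for $v\in\partial K$. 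So the $a(\mathcal{K})$ term originates in the boundary conditionals of the localized Bayes update itself, not, as you claim, in the cross-block entries of the propagation matrix.

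To repair your argument you would need to (i) restate the stability lemma with the right-hand side depending on $\max_{v'}e^{-\beta d(v,v')}\sup_{x,z}\|\mu^{v'}_{x,z}-\nu^{v'}_{x,z}\|$ rather than on $\|\mu-\nu\|_{K'}$, and (ii) prove a separate bound on the conditional distributions of $\tilde{\mathsf{C}}^m_s\mathsf{P}\tilde{\pi}_{s-1}$ versus $\mathsf{C}_s\mathsf{P}\tilde{\pi}_{s-1}$, distinguishing $v\in\widetilde{K}$ from $v\in\partial K$; the final time step $t=T$, where the target really is an LTV norm, is then handled directly by Proposition \ref{prop:goodmarg}. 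Your treatment of the self-consistent correlation induction defining $\mathcal{R}_0$ and $\beta$ is essentially correct and matches Corollary \ref{coroldecay}.
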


The proof of Theorem \ref{th:goodapprox} is in appendix A. Explicit expressions for $\mathcal{R}_0$, $\beta$, $\alpha_1(\beta)$ and $ \gamma_1(\beta)$ are given in the proof of the theorem and its supporting results. The full assumption on the initial distribution $\mu_0$ can be found in appendix A.

The second term on the right hand side of the bound on $\norm{\pi_t- \tilde{\pi}_t}_J$ given in Theorem \ref{th:goodapprox} is, up to a numerical constant, equal to the upper bound obtained in Proposition \ref{prop:goodmarg}, see discussion there of its dependence on $m$ and $\mathcal{K}$. The first term on the right hand side of the bound on $\norm{\pi_t- \tilde{\pi}_t}_J$ depends on the neighborhood structure of the factor graph $\mathcal{G}$. Loosely speaking, the constant $a(\mathcal{K})$ is small when the graph is sparsely connected and the partition is coarse, and in the extreme case of the trivial partition $\mathcal{K}=\{V\}$, the constant $a(\mathcal{K})$ is zero because, in the notation of \eqref{eq:J_tilde_notation}, $\tilde{V}=V$ and so $\partial V$ is empty. The quantities in the hypotheses and bound of Theorem \ref{th:goodapprox} exhibit the same dimension-free qualities as discussed after Proposition \ref{prop:goodmarg}.

\begin{algorithm}[h]
	\caption{Graph Filter}\label{alg:newforward}
	\begin{algorithmic}[1]
	\Require{ $\mathcal{K},	(N^m_f(K))_{K \in \mathcal{K}}, (N^m_v(K))_{K \in \mathcal{K}}, (\mu_0^K)_{K \in \mathcal{K}}, (p^v(\cdot,\cdot))_{v \in V},(g^f(\cdot, \cdot))_{f \in F}, (y_t)_{t=\{1,\dots,T\}}$}
	\For{$K \in \mathcal{K}$}
		\State{$\tilde{\pi}_0^K\leftarrow  \mu_0^K$}
		\State{\bf{Compute  $p^K(\cdot,\cdot)\leftarrow \prod_{v \in K} p^v(\cdot,\cdot)$}}
	\EndFor
	\For{$t \in \{1,\dots,T\}$}
		\For{$K \in \mathcal{K}$}
			\For{${z^K \in \mathbb{X}^K}$}
				\State{${\hat{\pi}}^K(z^K) \leftarrow \sum_{x^K \in \mathbb{X}^K} p^K(x^K,z^K) \cdot \tilde{\pi}^K_{t-1}(x^K)$}
			\EndFor
		\EndFor
		\State{$(\tilde{\pi}_t^K)_{K \in \mathcal{K}} \leftarrow$\bf{Algorithm \ref{alg:margcon}}$\left (\mathcal{K}, (N^m_f(K))_{K \in \mathcal{K}}, (N^m_v(K))_{K \in \mathcal{K}},(\hat{\pi}^K)_{K \in \mathcal{K}},(g^f(\cdot, y_t))_{f \in F} \right )$}
	\EndFor
	\State{\Return {$((\tilde{\pi}_t^K)_{K \in \mathcal{K}})_{t = \{0,\dots,T \}}$}}
	\end{algorithmic}
\end{algorithm}

Implementation of the approximate filtering method is shown in Algorithm \ref{alg:newforward}, which we shall refer to from now on as the Graph Filter. Noting that the complexity of computing $\hat{\pi}^K \leftarrow (\mathsf{P} \mu)^K$, which is $\mathcal{O}(T L^{2 \max\limits_{K \in \mathcal{K}}\mathbf{card}(K)})$, is dominated by the cost of Algorithm \ref{alg:margcon}, with an additional $2$ at the exponent of $L$. The overall complexity of Algorithm \ref{alg:newforward} is then:
$$
\mathcal{O} \left ( T \mathbf{card}(\mathcal{K}) \max\limits_{K \in \mathcal{K}}\mathbf{card}\left(N^m_f(K)\right) L^{2\max\limits_{K \in \mathcal{K}}\mathbf{card}(N^m_v(K))} \right ).
$$
Remark that when implementing the correction step the Bayes update is performed in the for loop over the partition. Even though such an implementation is faster, it requires a single loop over the partition, the theoretical computational cost does not change.

The reader is referred back to the end of section \ref{subsec:approx_bayes} for detailed discussion of the exponent of $L$ appearing here. We note that for the chain graph example in Figure \ref{fig:fgsimple} with $\mathcal{K}=\{\{1\},\{2\}, \ldots \}$, the complexity is:
\begin{equation*} \label{costtheorychain}
\mathcal{O} \left ( T \mathbf{card}(\mathcal{K}) \min\{2(m+1), M-1\} L^{2 \min\{2(m+1)+1, M\}}  \right ).
\end{equation*}
Thus the exponent of $L$ has the dimension free property of becoming independent of $M$ depending on $m$ when $M$ is large enough.

\subsection{Graph Smoother}\label{subsec:approx_smoothing}
The approximate smoothing distributions are defined by simply substituting the approximate filtering distributions into \eqref{smoothing}:
\begin{equation}\label{newsmoothing}
\tilde{\pi}_{T|T}\coloneq \tilde{\pi}_{T},\qquad
\tilde{\pi}_{t|T}\coloneq \mathsf{R}_{\tilde{\pi}_{t}} \tilde{\pi}_{t+1|T}, \qquad t \in \{T-1,\ldots,0\}.
\end{equation}

\begin{theorem} \label{thm:smoothing}
Fix any collection of observations $\{y_1,\dots,y_T\}$ and any partition $\mathcal{K}$ of $V$. There exists a region $\tilde{\mathcal{R}}_0 \subseteq (0,1)^3$  depending only on $\tilde{\Upsilon}$,$\Upsilon$ and $\Upsilon^{(2)}$, such that if, for given $(\epsilon_{-},\epsilon_{+},\kappa) \in \tilde{\mathcal{R}}_0$,
$$
\epsilon_{-} \leq p^v(x^v,z^v)  \leq {\epsilon_{+}},
\quad
\text{and}
\quad
\kappa \leq g^f\left ({x}^{N{(f)}},y_t \right ) \leq \frac{1}{\kappa},
$$
for all $x, z \in\mathbb{X}^V, v \in V, f \in F, t \in \{1,\dots,T\}$, then for $\beta > \log(2)$ small enough depending only on $\tilde{\Upsilon},\Upsilon,\Upsilon^{(2)}, \epsilon_{-},\epsilon_{+}$ and $\kappa$, we have that for any $\mu_0$ satisfying: 
\begin{equation}
\widetilde{\mbox{Corr}}(\mu_0,\beta) \leq 2 e^{-\beta} \left ( 1 - \frac{\epsilon_{-}}{\epsilon_{+}} \right ) + 2 e^{2 \beta} \Upsilon^{(2)} \left ( 1 - \kappa^{2 \tilde{\Upsilon}} \right )
\end{equation}
and for any $K \in \mathcal{K}$, $J \subseteq K$ and $m \in \{0,\dots,n\}$:
\begin{equation*}
\begin{split}
    \norm{\tilde{\pi}_{t|T} - {\pi}_{t|T}}_J
     & \leq
\alpha_2(\beta, \epsilon_{-}, \epsilon_{+})\left ( 1-\kappa^{a(\mathcal{K})} \right )\mathbf{card}(J)+ \gamma_2(\beta, \epsilon_{-}, \epsilon_{+}) \left ( 1-\kappa^{b(\mathcal{K},m)} \right ) \mathbf{card}(J) e^{-\beta m},
\end{split}
\end{equation*}
where $\pi_{t|T}, \tilde{\pi}_{t|T}$ are given by \eqref{smoothing} and \eqref{newsmoothing} with initial distribution $\mu_0$,  $\alpha_2(\beta, \epsilon_{-},\epsilon_{+})$ and $\gamma_2(\beta,\epsilon_{-},\epsilon_{+})$ are constants depending on $\epsilon_{-},\epsilon_{+}$, $\beta$ and
\begin{align*}
a(\mathcal{K}) &\coloneq 2 \max_{K \in \mathcal{K}} \max_{v \in \partial K} \mathbf{card}(N(v) \cap \partial N(K)),\\
b(m,\mathcal{K}) &\coloneq 2 \max_{K \in \mathcal{K}}\max_{v \notin { N^{m-1}_v(K)}}  \mathbf{card}(N(v)),
\end{align*}
with the convention that the maximum over an empty set is zero.
\end{theorem}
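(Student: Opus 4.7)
The plan is to mimic the structure of the proof of Theorem \ref{th:goodapprox} but applied to the backward recursion, reducing the smoothing error to the filtering error of Theorem \ref{th:goodapprox} via a telescoping argument, and controlling each term through a Dobrushin-style local contraction of the backward operator $\mathsf{R}_{\pi_t}$.

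First, write $\Delta_t\coloneq\pi_{t|T}-\tilde\pi_{t|T}$ and unroll \eqref{smoothing} versus \eqref{newsmoothing}: for $t<T$,
\begin{equation*}
\Delta_t = \mathsf{R}_{\pi_t}\Delta_{t+1} + (\mathsf{R}_{\pi_t}-\mathsf{R}_{\tilde\pi_t})\tilde\pi_{t+1|T},
\end{equation*}
and iterating backwards gives the telescoping identity
\begin{equation*}
\Delta_t = \mathsf{R}_{\pi_t}\!\cdots\!\mathsf{R}_{\pi_{T-1}}\Delta_T + \sum_{s=t}^{T-1}\mathsf{R}_{\pi_t}\!\cdots\!\mathsf{R}_{\pi_{s-1}}\underbrace{(\mathsf{R}_{\pi_s}-\mathsf{R}_{\tilde\pi_s})\tilde\pi_{s+1|T}}_{\eqqcolon\,\delta_s}.
\end{equation*}
The boundary term $\Delta_T=\pi_T-\tilde\pi_T$ is a pure filtering error to which Theorem \ref{th:goodapprox} applies directly in LTV norm. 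The remaining task is to bound the action of the composite backward operators on $\Delta_T$ and on each $\delta_s$ in the local norm $\|\cdot\|_J$.

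The second step is to establish a local mixing property of the backward kernel. Since $p(x,z)=\prod_v p^v(x^v,z^v)$ and $\epsilon_-\le p^v\le\epsilon_+$, the ratio defining $R_{\pi_t}(x,z)$ enjoys a Doeblin-type minorization whose coefficients can be assembled into a Dobrushin matrix $D^{\mathrm{bw}}_t$ on the vertex set $V$; the factorial structure of $p$ localizes $D^{\mathrm{bw}}_t$ to the diagonal, so its $\ell_\infty$-norm is bounded by a constant $c(\epsilon_-,\epsilon_+)<1$. Following the Dobrushin-comparison argument used in the proof of Theorem \ref{th:goodapprox} (and the template of \cite{rebeschini2015can}), one obtains, for any signed measure $\eta$ on $\mathbb{X}^V$ and any subset $J$,
\begin{equation*}
\|\mathsf{R}_{\pi_t}\!\cdots\!\mathsf{R}_{\pi_{s-1}}\eta\|_J \le C\sum_{v\in J}\sum_{v'\in V} e^{-\beta d(v,v')}\,[\bar D^{s-t}]_{v,v'}\,\|\eta\|_{\{v'\}}^{\mathrm{w}},
\end{equation*}
where $\bar D$ is a matrix with geometrically decaying entries in graph distance and the ``$\mathrm{w}$'' superscript denotes a weighted local-variation functional. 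The constraint $\beta>\log 2$ is precisely what forces $e^{-\beta}<1/2$, which compensates for the doubling of the influence region at each backward application of $\mathsf{R}_{\pi_t}$ through its $p^v$-factors, ensuring the series over $s$ converges uniformly in $T$.

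The third step, and the main obstacle, is the local operator-perturbation bound
\begin{equation*}
\|\delta_s\|_{J'} \;=\;\|(\mathsf{R}_{\pi_s}-\mathsf{R}_{\tilde\pi_s})\tilde\pi_{s+1|T}\|_{J'}\;\le\; C(\epsilon_-,\epsilon_+)\,\|\pi_s-\tilde\pi_s\|_{J''}
\end{equation*}
for $J''$ an $O(1)$-enlargement of $J'$ in graph distance. The nontrivial point is that $R_\nu(x,z)=p(x,z)\nu(x)/(\mathsf{P}\nu)(z)$ depends \emph{globally} on $\nu$ through the denominator; however, since $\tilde\pi_s$ factorizes over $\mathcal{K}$ and $p$ factorizes over $V$, the backward kernel $R_{\tilde\pi_s}(\cdot,z)$ itself factorizes into a product of local ratios indexed by $K\in\mathcal{K}$, each normalizer involving only $\tilde\pi_s^K$. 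A coupling/linearization argument along the convex combination $\alpha\pi_s+(1-\alpha)\tilde\pi_s$, combined with the uniform upper and lower bounds on denominators supplied by $\epsilon_-,\epsilon_+$, then shows that a local perturbation of the filter induces only a local perturbation of the smoothing kernel. Substituting the filtering bound from Theorem \ref{th:goodapprox} for $\|\pi_s-\tilde\pi_s\|_{J''}$ and summing the geometric series from the second step produces a bound on $\|\Delta_t\|_J$ of the required form
\begin{equation*}
\|\Delta_t\|_J \le \alpha_2(\beta,\epsilon_-,\epsilon_+)(1-\kappa^{a(\mathcal{K})})\mathbf{card}(J) + \gamma_2(\beta,\epsilon_-,\epsilon_+)(1-\kappa^{b(\mathcal{K},m)})\mathbf{card}(J)e^{-\beta m},
\end{equation*}
with $\alpha_2,\gamma_2$ absorbing the $T$-uniform geometric sum, and with $\tilde{\mathcal{R}}_0$ chosen as the intersection of $\mathcal{R}_0$ from Theorem \ref{th:goodapprox} with the region where the backward Dobrushin contraction of step two holds.
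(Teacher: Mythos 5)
Your overall architecture (telescope the backward recursion, contract the composite backward operators via Dobrushin, bound the one-step perturbation, feed in the filtering error) matches the paper's, but there is a genuine gap at the point where you close the loop with Theorem \ref{th:goodapprox}. The Dobrushin comparison machinery, when applied to the backward chain, produces error terms $b_j$ that are suprema of total-variation differences of \emph{single-site conditional distributions} of the filters, i.e.\ quantities of the form $\sup_{x,z}\norm{(\pi_s)^{v}_{x,z}-(\tilde{\pi}_s)^{v}_{x,z}}$ (and the analogous backward-conditional quantities at the terminal time $T$). Your steps two and three instead express everything in terms of local marginal LTV norms $\norm{\pi_s-\tilde{\pi}_s}_{J''}$ and then substitute the bound of Theorem \ref{th:goodapprox}. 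A bound on local marginals does not control conditional distributions, so Theorem \ref{th:goodapprox} cannot be substituted here. The paper has to prove a separate, time-uniform bound on the conditional-distribution error (its Propositions \ref{prop:controlonestepfor}, \ref{prop:controlonestepfor1cond} and \ref{prop:controlonestepbac}), via its own induction in $t$ in which each step contributes a factor $2e^{-\beta}$ and the resulting geometric series $\sum_j(2e^{-\beta})^j$ is summable only when $\beta>\log 2$. This is also where the strengthened region $\tilde{\mathcal{R}}_0\subsetneq\mathcal{R}_0$ comes from. Your explanation of the constraint $\beta>\log 2$ (convergence of the series over $s$ in the backward pass) is therefore misattributed: that series converges for every $\beta>0$; the constraint is forced by the forward-in-time accumulation of conditional-distribution errors in the filter.

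A secondary issue: you place the \emph{exact} operators $\mathsf{R}_{\pi_t}\cdots\mathsf{R}_{\pi_{s-1}}$ in front of each perturbation term and claim the associated Dobrushin matrix is essentially diagonal because $p$ factorizes. But the backward kernel $\overleftarrow{p}_t(z,x)\propto p(x,z)\pi_t(x)$ carries the exact filter $\pi_t$, which does not factorize across $V$ (the likelihood factors couple coordinates), so its single-site conditional distributions have off-diagonal Dobrushin coefficients $\tilde{C}^{\pi_t}_{v,v'}$ that must be controlled by a decay-of-correlations estimate for the \emph{exact} filter — something neither you nor the paper establishes. The paper avoids this by arranging the telescoping so that the composite operators in front are the approximate ones $\mathsf{R}_{\tilde{\pi}_t}\cdots\mathsf{R}_{\tilde{\pi}_{t+s-1}}$, whose reference measures factorize over $\mathcal{K}$ and whose correlation decay is already guaranteed by Corollary \ref{coroldecay}. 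You would need either to rearrange your telescoping accordingly or to prove decay of correlations for $\pi_t$ separately.
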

The proof of Theorem \ref{thm:smoothing} is in appendix A. The only difference between the bound in this theorem and that in Theorem \ref{th:goodapprox} are the constants $\alpha_2(\beta, \epsilon_{-},\epsilon_{+})$ and $\gamma_2(\beta,\epsilon_{-},\epsilon_{+})$, explicit expressions can be deduced from the proof. The full assumption on the initial distribution $\mu_0$ can be found in appendix A.

\begin{algorithm}[h]
\caption{Graph Smoother}\label{alg:newback}
\begin{algorithmic}[1]
\Require{ $\mathcal{K}, (p^v(\cdot,\cdot))_{v \in V}, (\tilde{\pi}_t^K)_{K \in \mathcal{K},t = \{0,\dots,T \}}$}
  \For{$K \in \mathcal{K}$}
 	\State{ $\tilde{\pi}_{T|T}^K \leftarrow \tilde{\pi}_T^K$}
 	\State{\bf{Compute } $p^K(\cdot,\cdot)\coloneq \prod_{v \in K} p^v(\cdot,\cdot)$}
 	\EndFor
 \For{$t \in \{T-1,\dots,0\}$}
		\For{$K \in \mathcal{K}$}
		\For{$z^K \in \mathbb{X}^K$}
		\For{$x^K \in \mathbb{X}^K$}
		\State{$\overleftarrow{p}^K(z^K,x^K) \leftarrow {p^K(x^K,z^K) \tilde{\pi}_{t}^K({x^K})}$}
		\EndFor
		\State{\bf{Normalize $\overleftarrow{p}^K(z^K,\cdot)$ to a probability mass function on} $\mathbb{X}^{{K}}$ }
		\EndFor
		\For{$x^K \in \mathbb{X}^K$}
		\State{$\tilde{\pi}_{t|T}^K(x^K) \leftarrow \sum_{z^K \in \mathbb{X}^K} \overleftarrow{p}^K(z^K,x^K)\tilde{\pi}_{t+1|T}^K({z^K})$}
		\EndFor
		\EndFor
 \EndFor	
\State{\Return {$((\tilde{\pi}_{t|T}^K)_{K \in \mathcal{K}})_{t = \{1,\dots,T \}}$}}
\end{algorithmic}
\end{algorithm}

The approximate smoothing method, shown in Algorithm \ref{alg:newback} has complexity
$$
\mathcal{O} \left ( T\mathbf{card}(\mathcal{K})L^{3 \max\limits_{K \in \mathcal{K}}\mathbf{card}(K)} \right ).
$$

Assuming $3 \max_{K \in \mathcal{K}}\mathbf{card}(K)$ is smaller than $2 \max_{K \in \mathcal{K}}\mathbf{card}(N^m_v(K))$, which is typically the case in practice, the overall complexity of Algorithm \ref{alg:newforward} combined with Algorithm \ref{alg:newback} is:
$$
\mathcal{O} \left ( T \mathbf{card}(\mathcal{K}) \max\limits_{K \in \mathcal{K}}\mathbf{card}\left(N^m_f(K)\right) L^{2\max\limits_{K \in \mathcal{K}}\mathbf{card}(N^m_v(K))} \right ).
$$
The reader is referred back to the end of section \ref{subsec:approx_bayes} for detailed discussion of the exponent of $L$ appearing here.

\section{Numerical results}\label{sec:numerical}

Section \ref{subsec:model_spec} describes a class of FHMMs with conditionally Gaussian observations used as a running example in \cite{Ghahramani1997} and which we shall use in our numerical experiments. The purpose of the first set of experiments, in Section \ref{subsec:error_and_speed}, is to illustrate the practical implications of our theoretical results, assessing the performance of the Graph Filter and Smoother methods against exact filtering and smoothing, both in terms of accuracy and computational speed. In Section \ref{sub:comparison_VB} we compare the performance of EM algorithms for parameter estimation built around the Graph Smoother and variational approximations presented in \cite{Ghahramani1997}. In Section \ref{subsec:LU} we outline a model of traffic flow on the London Underground and illustrate parameter estimation and prediction using the Graph Filter and Smoother.

We used the University of Bristol's BlueCrystal High Performance Computing machine. The experiments were run on either one or two standard compute nodes each with {\sc{2 x 2.6GHz 8-core Intel E5-2670 (SandyBridge)}} chips and 4GB of RAM per core.

\subsection{Synthetic data}\label{subsec:model_spec}
With $\mathbb{X}$ a finite subset of $\mathbb{Z}$, $V=\{1,\ldots,M\}$ and $\mathbb{Y}=\mathbb{R}^{d_y}$, consider the Gaussian emission model from  \cite{Ghahramani1997}:
$$
g(x,y)= |\Sigma|^{-\frac{1}{2}} (2 \pi)^{-\frac{d_y}{2}} \exp \left \{ -\frac{1}{2} \left [ y-a(x) \right ]^T \Sigma^{-1} \left [ y-a(x) \right] \right \},
$$
where $a(x)$ is a vector whose entries may depend on $x$.
\begin{figure}[httb!]
\centering
\resizebox{1\textwidth}{0.15\textheight}{%
\begin{tikzpicture}[on grid ,state/.style ={ circle , draw , text=blue,font=\bfseries\LARGE , minimum width =2 cm}]
\node[state] (Xt1) at (0,0) {$X_t^{5}$};
\node[state] (Xt2) at (-6,0) {$X_t^{4}$};
\node[state] (Xt3) at (-12,0) {$X_t^{3}$};
\node[state] (Xt4) at (-18,0) {$X_t^{2}$};
\node[state] (Xt5) at (-24,0) {$X_t^{1}$};


\draw[thick] (1,4.5)   -- (1,4.5)   node[red] {$Y_t^{4} \sim \mbox{N}(cX_t^{5}+cX_t^{4},\sigma^2)$};
\draw[thick] (-8,4.5)  -- (-8,4.5)  node[red] {$Y_t^{3} \sim \mbox{N}(cX_t^{4}+cX_t^{3},\sigma^2)$};
\draw[thick] (-16,4.5) -- (-16,4.5) node[red] {$Y_t^{2} \sim \mbox{N}(cX_t^{3}+cX_t^{2},\sigma^2)$};
\draw[thick] (-25,4.5) -- (-25,4.5) node[red] {$Y_t^{1} \sim \mbox{N}(cX_t^{2}+cX_t^{1},\sigma^2)$};


\draw[thick] (0,-1.75)   -- (0,-1.75)   node[blue] {$X_t^{5} \sim p(X_{t-1}^{5}, \cdot)$};
\draw[thick] (-6,-1.75)  -- (-6,-1.75)   node[blue] {$X_t^{4} \sim p(X_{t-1}^{4}, \cdot)$};
\draw[thick] (-12,-1.75)  -- (-12,-1.75)  node[blue] {$X_t^{3} \sim p(X_{t-1}^{3}, \cdot)$};
\draw[thick] (-18,-1.75) -- (-18,-1.75) node[blue] {$X_t^{2} \sim p(X_{t-1}^{2}, \cdot)$};
\draw[thick] (-24,-1.75) -- (-24,-1.75) node[blue] {$X_t^{1} \sim p(X_{t-1}^{1}, \cdot)$};

\node[state, diamond, text=red] (f1) at (0,3) {$f^4$};
\node[state, diamond, text=red] (f2) at (-9,3) {$f^3$};
\node[state, diamond, text=red] (f3) at (-15,3) {$f^2$};
\node[state, diamond, text=red] (f4) at (-24,3) {$f^1$};

\path (Xt1) edge [red] node {} (f1);
\path (Xt2) edge [red] node {} (f1);
\path (Xt2) edge [red] node {} (f2);
\path (Xt3) edge [red] node {} (f2);
\path (Xt3) edge [red] node {} (f3);
\path (Xt4) edge [red] node {} (f3);
\path (Xt4) edge [red] node {} (f4);
\path (Xt5) edge [red] node {} (f4);

\end{tikzpicture}
,}
\caption{Factor graph for the model \eqref{ass:simplified} in the case $F=\{f^1,f^2,f^3,f^4\}$ and $V=\{1,2,3,4,5\}$.} \label{fig:experimental1}
\end{figure}

We specialize to the case $d_{y}=M-1$ and the specific forms of $\Sigma$ and $a(x)$:
\begin{equation} \label{ass:simplified}
\Sigma= \sigma^2 I \quad \text{and} \quad a(x) =  \left ( a^{f}(x) \right )_{f \in \{1 \dots M-1\}} \quad \text{with} \quad a^f (x)\coloneq c \left (x^{f}+ x^{f+1} \right),
\end{equation}
where $c>0$ is a constant. Under these assumptions, $N(f)=\{f,f+1\}$ and
\begin{equation}
g^f(x^{N(f)},y)=\frac{1}{\sqrt{2\pi}\sigma}\exp\left\{-\frac{[y^f-c(x^{f}+ x^{f+1})]^2}{2\sigma^2}\right\}.\label{eq:g^f_example}
\end{equation}

The corresponding factor graph $\mathcal{G}$ is a chain, illustrated in Figure \ref{fig:experimental1}. We also assume that the transition probabilities and initial distribution have identical components across $V$,
\begin{equation}
p(x,z)= \prod_{v \in V} p^v(x^v,z^v),\quad
\mu_0(x)= \prod_{v \in V} \mu_0^v(x^v),\quad p^v= \hat{p},\;\mu_0^v = \hat{\mu}_0,\;\forall v\in V.
\end{equation}
Throughout the experiments we fix the partition $\mathcal{K}$ as:
$$
\mathcal{K} = \{ \{1\},\dots,\{M\} \}.
$$

\paragraph{Accuracy and speed performance for filtering and smoothing}\label{subsec:error_and_speed}
We took  $\mathbb{X}= \{0,1\}$ and simulated three data sets of length $T=500$ from the model with parameters:
\begin{equation}\label{eq:ex1_params}
\hat{\mu}_0(x^v)= 1,\;x^v=1, \forall v\in V, \quad
\{\hat{p}(x^v,z^v)\}_{x^v,z^v \in \mathbb{X}}=
\begin{pmatrix}
0.6 & 0.4 \\
0.2 & 0.8
\end{pmatrix}, \quad
c=1, \quad
\sigma^2 =1.
\end{equation}

\begin{figure}[httb!]
\centering
\begin{subfigure}[t]{1\textwidth}
\includegraphics[trim={1cm 1cm 4cm 1cm},scale=0.3]{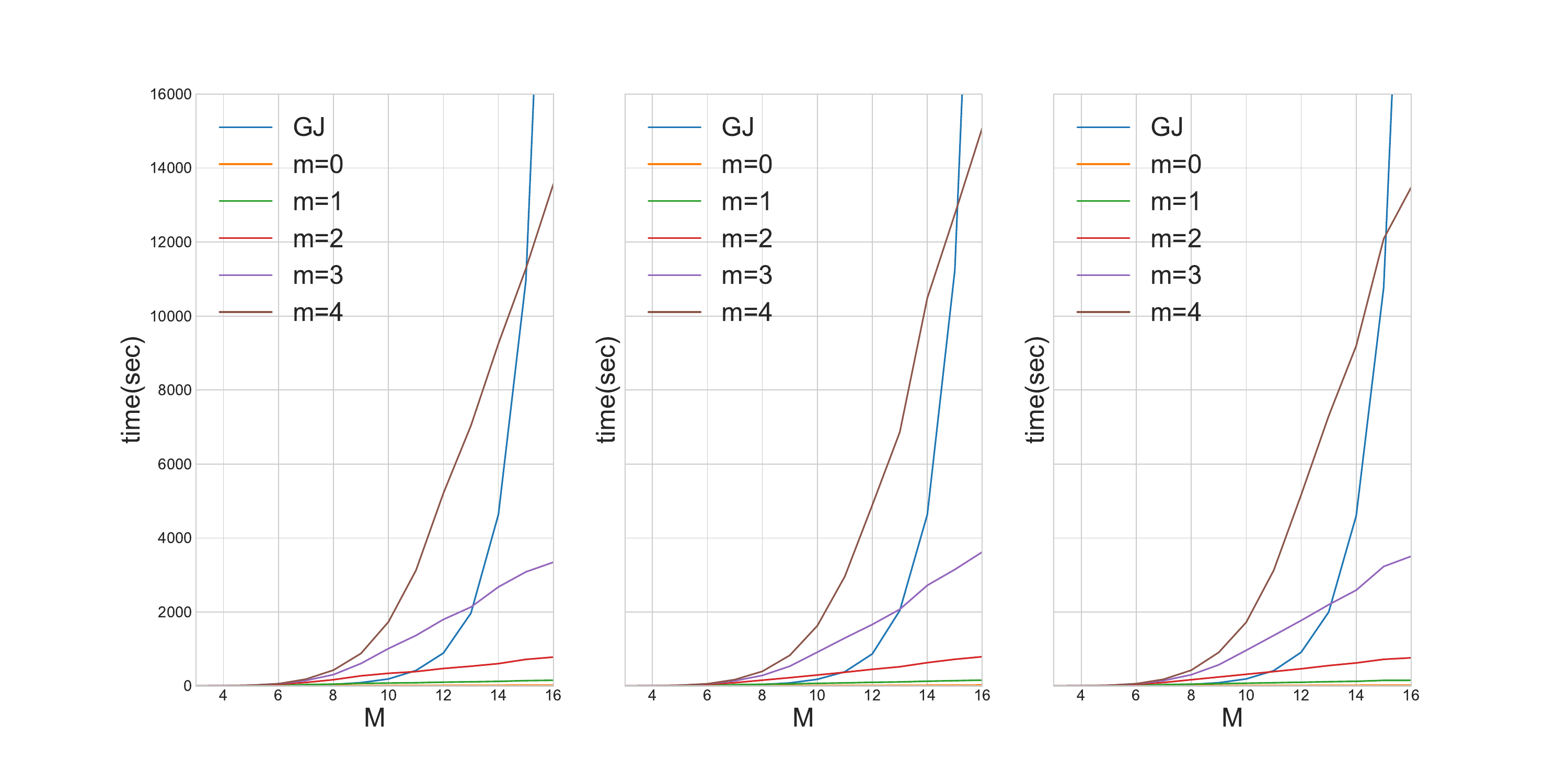}
\end{subfigure}
\begin{subfigure}[t]{1\textwidth}
\includegraphics[trim={1cm 1cm 4cm 1cm},scale=0.3]{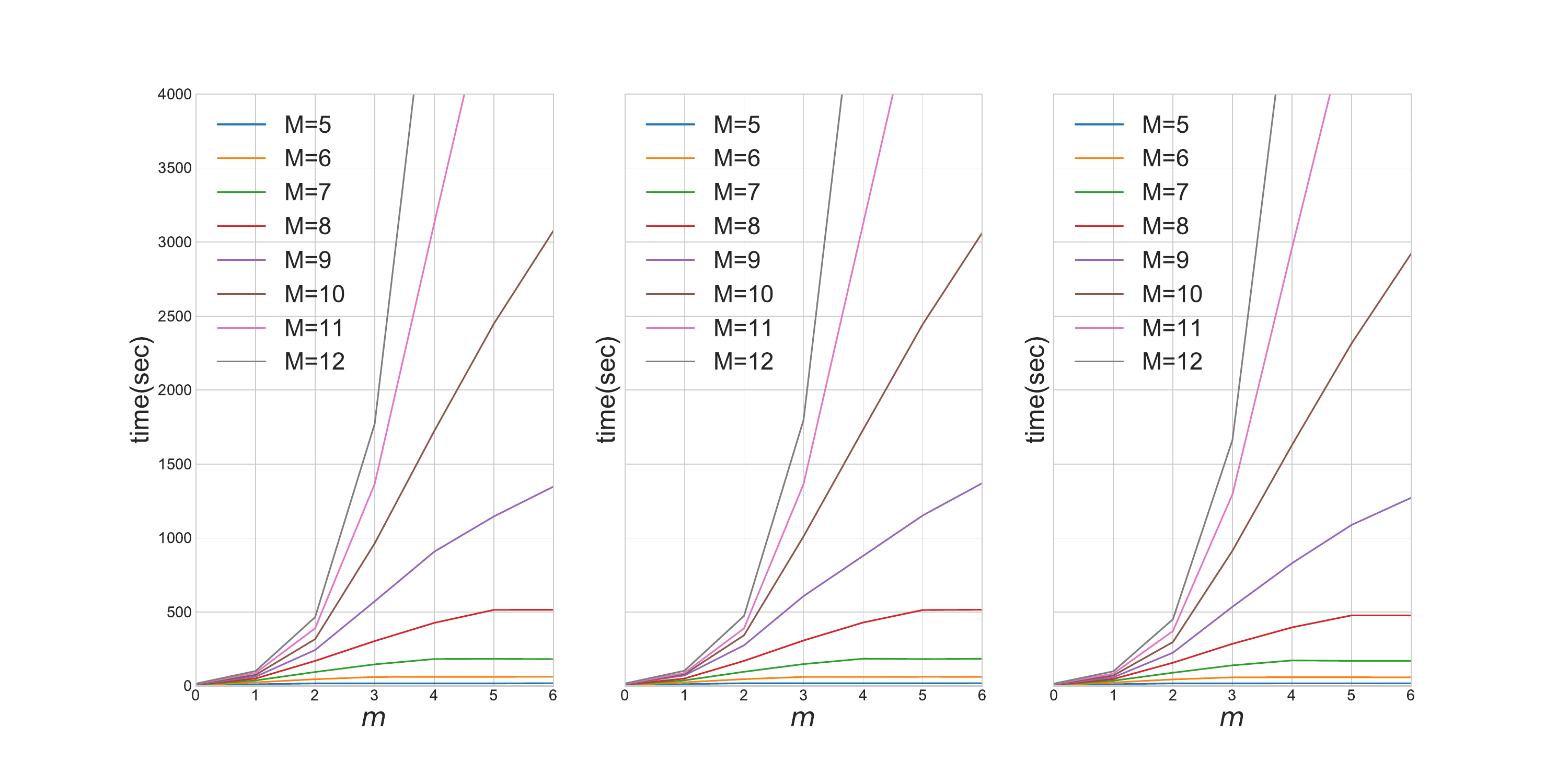}
\end{subfigure}
\caption{Execution time for the combined filtering and smoothing algorithms as a function of $m$ and $M$. Each vertical pair of plots corresponds to one of three simulated data sets. GJ is the exact filtering and smoothing algorithm of \cite{Ghahramani1997}.} \label{fig:experimentaltime2}
\end{figure}

First consider the execution time of the approximate filtering and smoothing method, i.e., the combination of Algorithm \ref{alg:newforward} and Algorithm \ref{alg:newback}, as a function of the parameters $m$ and $M$. Figure \ref{fig:experimentaltime2} shows execution time as $m$ and $M$ vary. The execution time of exact filtering and smoothing using the algorithm of \cite{Ghahramani1997}, henceforth \virg{GJ}, is included for reference.

It is apparent from the top row of plots that with $m$ fixed, the execution time of the Graph Filter and Smoother initially increases super-linearly with $M$ up to some point which depends on $m$, and from then on it is linear in $M$. This is most visually evident for the large values of $m$ and is consistent with the complexity of the combined Graph Filter and Smoother method discussed in Section \ref{subsec:approx_smoothing}, which for the model considered here is:
\begin{equation} \label{cost}
\mathcal{O} \left ( T M \min\{2(m+1), M-1\} L^{2 \min\{2(m+1)+1, M\}} \right ).
\end{equation}
By contrast, the execution time of GJ increases exponentially with $M$, making its implementation extremely expensive in high-dimensional cases.

When $M$ is fixed, it is clear from the bottom row of plots in Figure \ref{fig:experimentaltime2} that the execution time of the Graph Filter and Smoother is super-linear in $m$ up to some point which depends on $M$, and then is constant in $m$. Again this is consistent with \eqref{cost}. The phenomenon of the cost becoming constant in $m$ arises because as $m$ grows, eventually all factors are included in the products in $\tilde{\mathsf{C}}^{m,K}_t$, see \eqref{eq:localizfact}.

\begin{figure}[h!]
\centering
\includegraphics[trim={4cm 1cm 4cm 1cm},clip,scale=0.33]{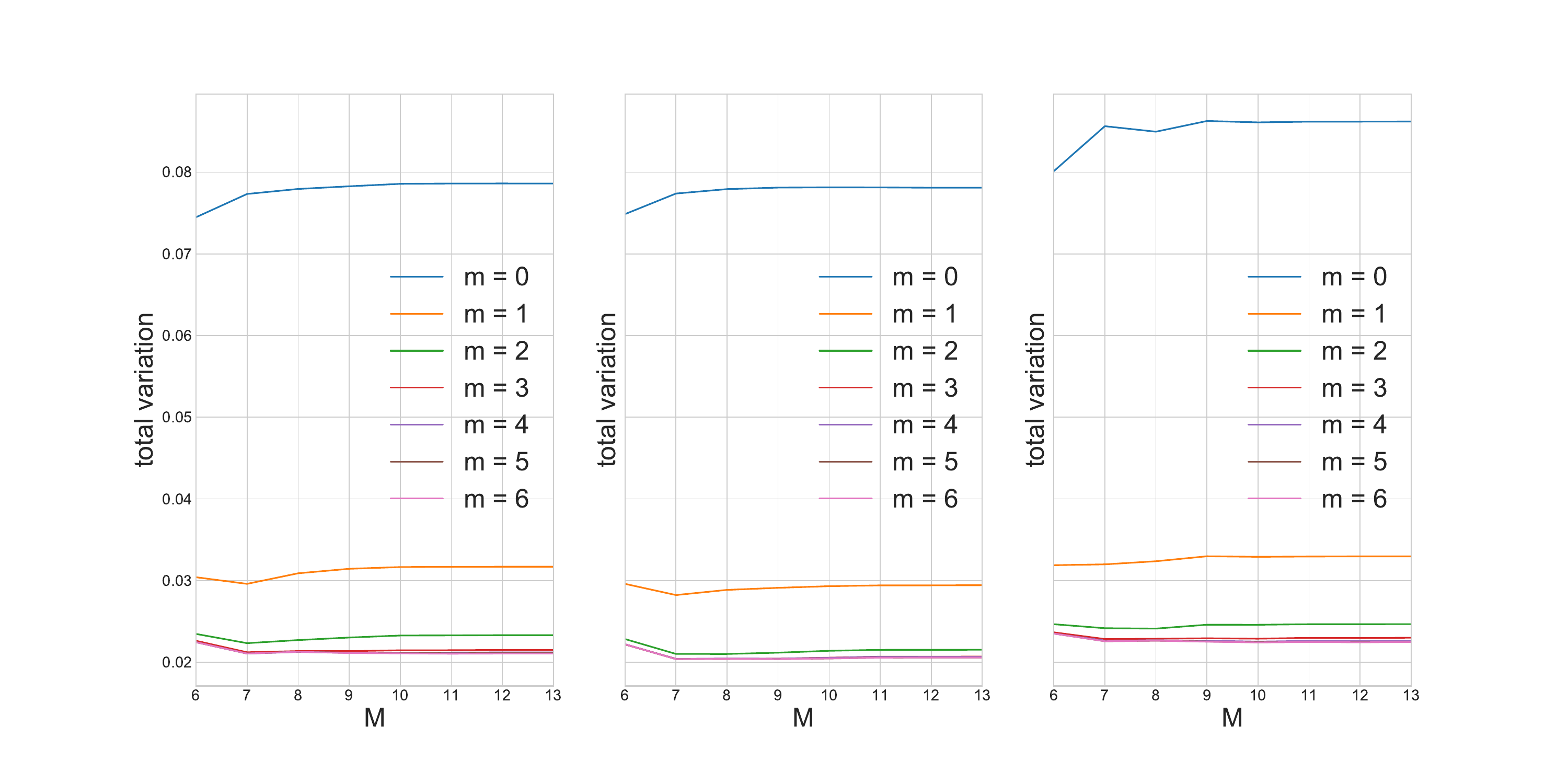}
\caption{The LTV distance between the approximate and exact marginal smoothing distributions averaged over both the components $X_t^i,\ldots,X_t^{i+4}$, with $i=1,\dots,8$, and the $T=500$ time steps. With $m$ fixed the average LTV is constant in $M$ for $M$ large enough. The three plots correspond to the three simulated data sets.} \label{fig:experimentaltime3}
\end{figure}

We now examine accuracy. Recall two important characteristics of the bound of Theorem \ref{thm:smoothing}: the bound does not depend on the overall dimension, $M$, and decays exponentially with $m$. The region $\tilde{\mathcal{R}}_0$ in Theorem \ref{thm:smoothing} is non-empty,  but  for the specific parameter settings in \eqref{eq:ex1_params} there does not exist $(\epsilon_-, \epsilon_+,\kappa)\in\tilde{\mathcal{R}}_0$ such that the assumptions of the theorem on $p^v$ and $g^f$ hold.   Thus technically Theorem \ref{thm:smoothing} does not hold in this example. However, Figure \ref{fig:experimentaltime3} and Figure \ref{fig:experimentaltime4} encouragingly show that the LTV between the exact and approximate smoothing distributions exhibits the characteristics of not depending on the overall dimension, $M$, and decaying exponentially with $m$.

\begin{figure}[httb]
\centering
\includegraphics[trim={4cm 1cm 4cm 1cm},clip,scale=0.33]{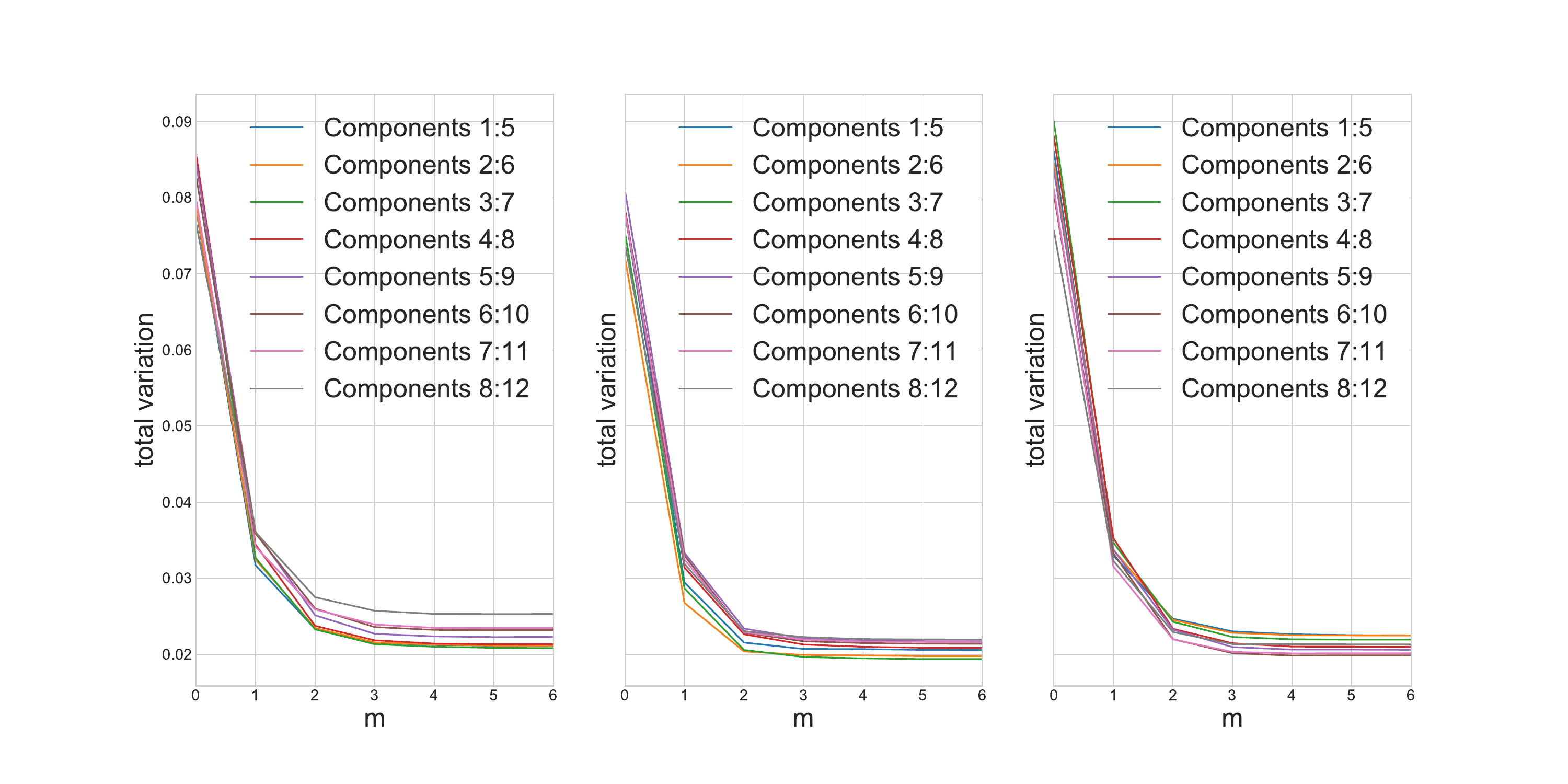}
\caption{The LTV distance between the approximate and exact marginal smoothing distributions of certain components of $X_t^1,\ldots,X_t^M$ and the $T=500$ time steps. The LTV decreases exponentially with $m$. The three plots correspond to the three simulated data sets.} \label{fig:experimentaltime4}
\end{figure}

\paragraph{Comparison to variational inference within EM for parameter estimation}\label{sub:comparison_VB}

Our next objective is to illustrate the accuracy of parameter estimation using the Graph Smoother within an approximate EM algorithm. We shall compare performance to the approximate EM approach of \cite{Ghahramani1997} in which variational approximations to the smoothing distributions are employed. For background on EM see \cite{dempster1977maximum} and \cite{Ghahramani1997} (Section 3.1).

\cite{Ghahramani1997} (Sections 3.4 and 3.5) describe two families of variational distributions for FHMM which can be used to compute the E-step in EM approximately: a \virg{fully-factorized} scheme in which the variational distribution is chosen to statistically decouple all state variables, $(X_t^v)_{v\in V}$, $t=0,\ldots, T$, in the HMM, and a \virg{structured} approximation, in which the variational distribution is Markovian in time but statistically decouples state variables across $V$. We shall refer to the former as \emph{completely decoupled} and the latter as \emph{spatially decoupled}.

The time complexity of computing the approximate smoothing distributions using either the completely decoupled or spatially decoupled schemes is:
\begin{equation}\label{eq:VB_cost}
\mathcal{O}(ITL^2M^2(M-1)^4),
\end{equation}
where $I$ is the number of iterations of the fixed-point equations needed to find the variational approximation. In our experiments we found that $I=20$ was sufficient for convergence, indeed \cite{Ghahramani1997} (pag. 254) suggest 2-10 iterations is typically sufficient. Recall that for $M$ large enough, \eqref{cost} is exponential in $m$, but linear in $M$, while \eqref{eq:VB_cost} scales no faster than $M^6$. Whether or not the variational approximations can be computed more quickly than the Graph Smoother is dependent on the model in question. In our experiments we did not find a substantial difference in speed.

Details of the EM updates using the Graph Smoother are given in appendix B. The only difference between these updates and those using the variational approximations is in the E-step, where the expectation is simply taken with respect to the corresponding approximate smoothing distribution.

\begin{table}[httb!]
	\centering
	\resizebox{\columnwidth}{!}{
		\begin{tabular}{cccccccc}
			\hline
			\hline
			Method                          & $\mu_0(0), \mu_0(1)$ & $c$   & $\sigma^2$ & $p(0,0), p(0,1), p(1,0), p(1,1)$  \\
			\hline
			\hline
			True values                     &     0.000, 1.000     & 2.000 & 4.000      &       0.600, 0.400, 0.200, 0.800 \\
			\hline
			Graph Smoother $m=0$            &     0.137, 0.863     & 1.753 & 4.642      &       0.072, 0.928, 0.518, 0.482 \\
			\hline
			Graph Smoother $m=1$            &     0.001, 0.999     & 1.779 & 4.542      &       0.075, 0.925, 0.549, 0.451 \\
			\hline
			Completely decoupled Variational Bayes &     0.384, 0.616     & 1.498 & 6.562      &       0.075, 0.925, 0.084, 0.916 \\
			\hline
			Spatially decoupled Variational Bayes  &     0.393, 0.607     & 1.960 & 6.223      &       0.362, 0.638, 0.385, 0.615 \\
			\hline
		\end{tabular}
	}
	\caption{Parameters estimates for the case $M=3$ with Graph Filter-Smoother and variational Bayes at the end of the EM algorithm. The estimates are found by taking the mean over the different initializations.}
	\label{tab:parametres_values_1}
\end{table} 

\begin{figure}[httb!]
	\centering
	\begin{subfigure}[t]{1 \textwidth}
		\includegraphics[trim={9cm 1cm 9cm 1cm},clip,scale=0.18]{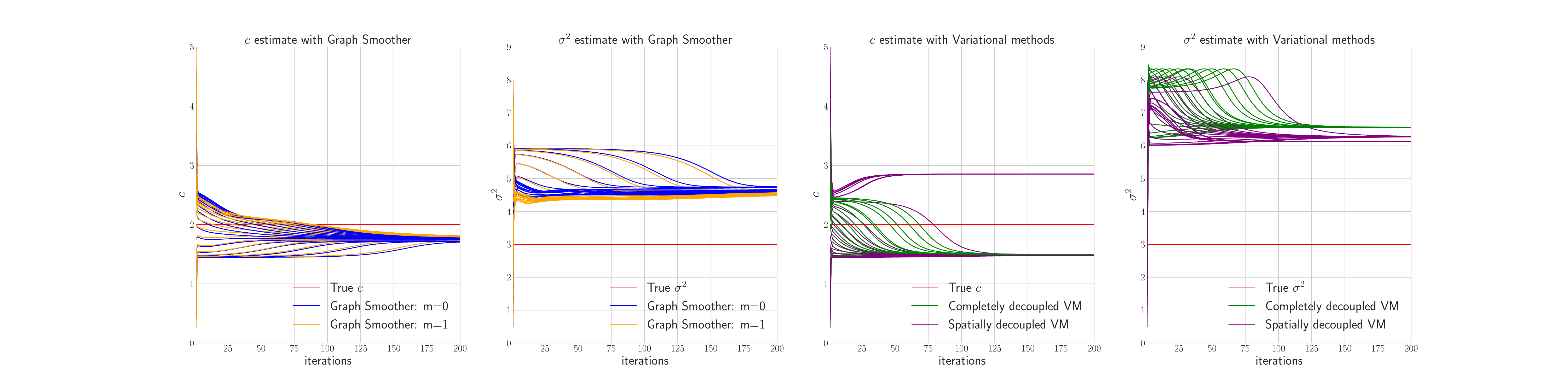}
	\end{subfigure}
	\caption{$M=3$. Estimation of $c$ and $\sigma^2$ using approximate EM based on the Graph Smoother and the completely and spatially decoupled variational approximations by \cite{Ghahramani1997}. Horizontal axes correspond to EM iterations. 20 different EM initializations shown for each algorithm setting.} \label{fig:experimentaltime5par}
\end{figure}

\begin{figure}[httb!]
	\begin{subfigure}[t]{1\textwidth}
		\includegraphics[trim={9cm 1cm 9cm 1cm},clip,scale=0.18]{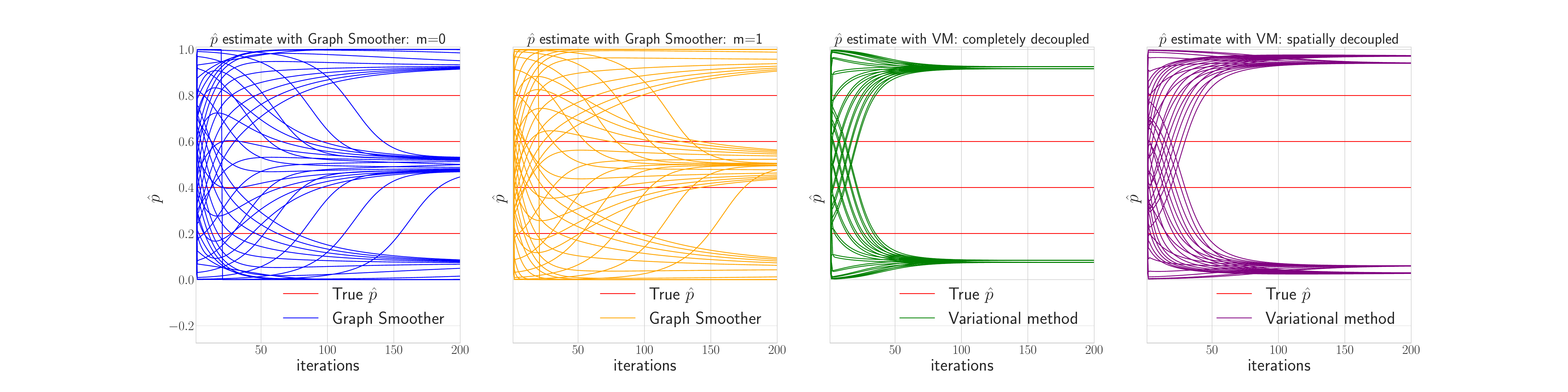}
	\end{subfigure}
	\begin{subfigure}[t]{1 \textwidth}
		\includegraphics[trim={9cm 1cm 9cm 1cm},clip,scale=0.18]{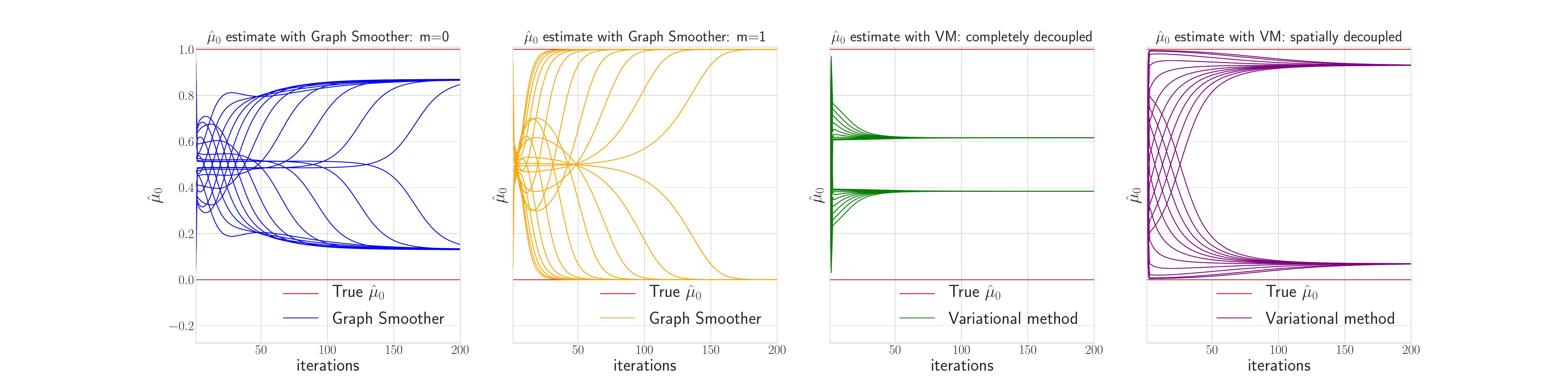}
	\end{subfigure}
	\caption{$M=3$. Estimation of $\hat{\mu}_0$ and $\hat{p}$ using approximate EM based on the Graph Smoother and the completely and spatially decoupled variational approximations by \cite{Ghahramani1997}. Horizontal axes correspond to EM iterations. 20 different EM initializations shown for each algorithm setting. Traces corresponding to the 2 elements of the initial distribution $\hat{\mu}_0$ and 4 elements of the transition matrix $\hat{p}$ are superimposed on each plot.} \label{fig:experimentaltime5kernel}
\end{figure}

\begin{table}[httb!]
	\centering
	\resizebox{\columnwidth}{!}{
		\begin{tabular}{cccccccc}
			\hline
			\hline
			Method                                 & $\mu_0(0), \mu_0(1)$ & $c$   & $\sigma^2$ & $p(0,0), p(0,1), p(1,0), p(1,1)$ \\
			\hline
			\hline
			True values                            &     0.000, 1.000     & 2.000 & 4.000      &       0.600, 0.400, 0.200, 0.800\\
			\hline
			Graph Smoother $m=0$                   &     0.206, 0.794     & 1.740 & 4.790      &       0.492, 0.508, 0.094, 0.906 \\
			\hline
			Graph Smoother $m=1$                   &     0.496, 0.504     & 1.765 & 4.651      &       0.556, 0.444, 0.096, 0.904 \\
			\hline
			Completely decoupled Variational Bayes &     0.469, 0.531     & 1.833 & 6.260      &       0.343, 0.657, 0.336, 0.664 \\
			\hline
			Spatially decoupled Variational Bayes  &     0.026, 0.974     & 1.423 & 5.804      &       0.044, 0.956, 0.040, 0.960 \\
			\hline
		\end{tabular}
	}
	\caption{Parameters estimates for the case $M=10$ with  Graph Filter-Smoother and variational Bayes at the end of the EM algorithm. The estimates are found by taking the mean over the different initializations.}
	\label{tab:parametres_values_2}
\end{table}

\begin{figure}[h!]
\centering
\begin{subfigure}[t]{1 \textwidth}
\includegraphics[trim={9cm 1cm 9cm 1cm},clip,scale=0.18]{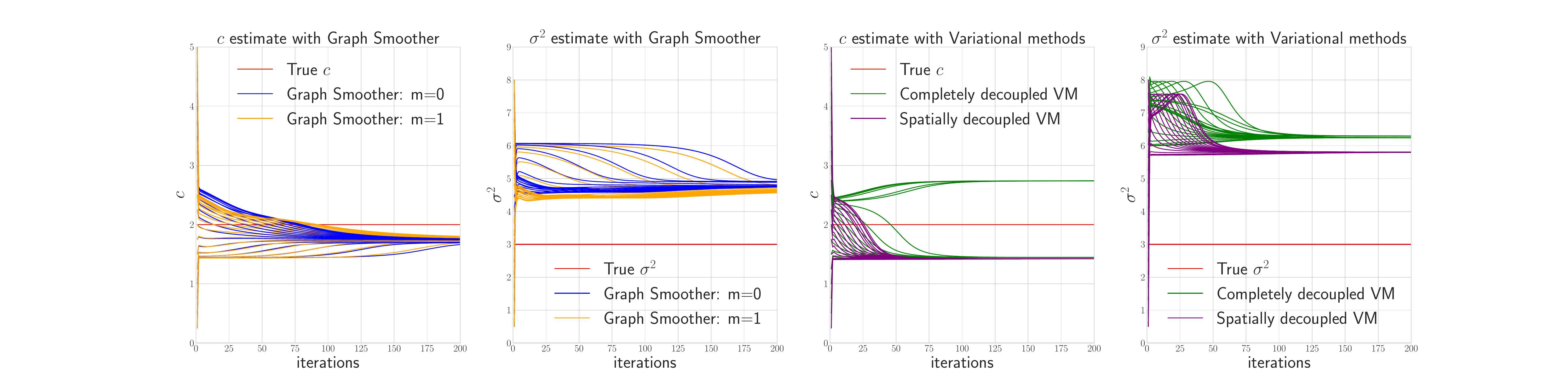}
\end{subfigure}
\caption{$M=10$. Estimation of $c$ and $\sigma^2$ using approximate EM based on the Graph Smoother and the completely and spatially decoupled variational approximations by \cite{Ghahramani1997}. Horizontal axes correspond to EM iterations. 20 different EM initializations shown for each algorithm setting.} \label{fig:experimentaltime6par}
\end{figure}

\begin{figure}[httb!]
\begin{subfigure}[t]{1\textwidth}
\includegraphics[trim={9cm 1cm 9cm 1cm},clip,scale=0.18]{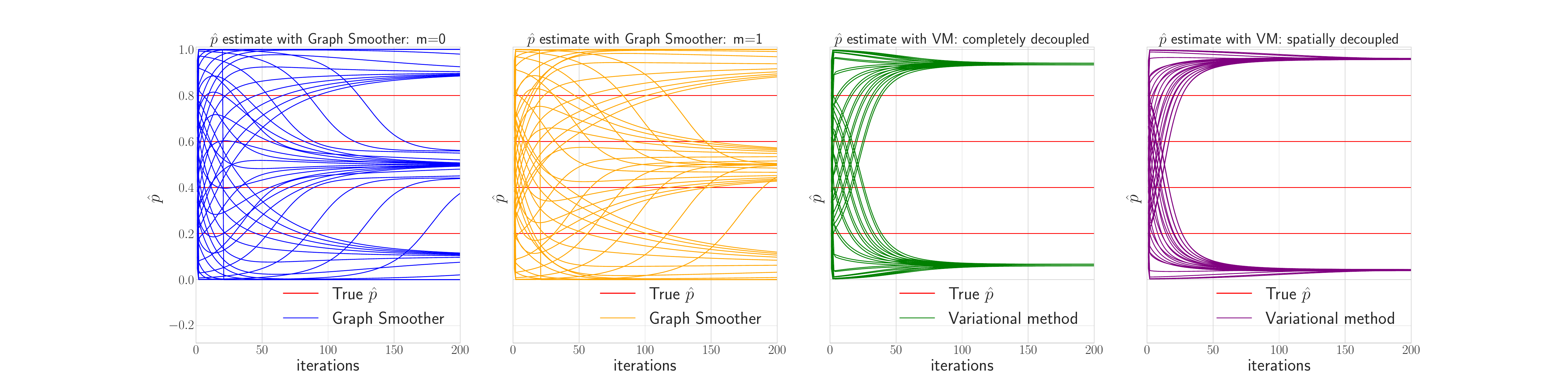}
\end{subfigure}
\begin{subfigure}[t]{1 \textwidth}
\includegraphics[trim={9cm 1cm 9cm 1cm},clip,scale=0.18]{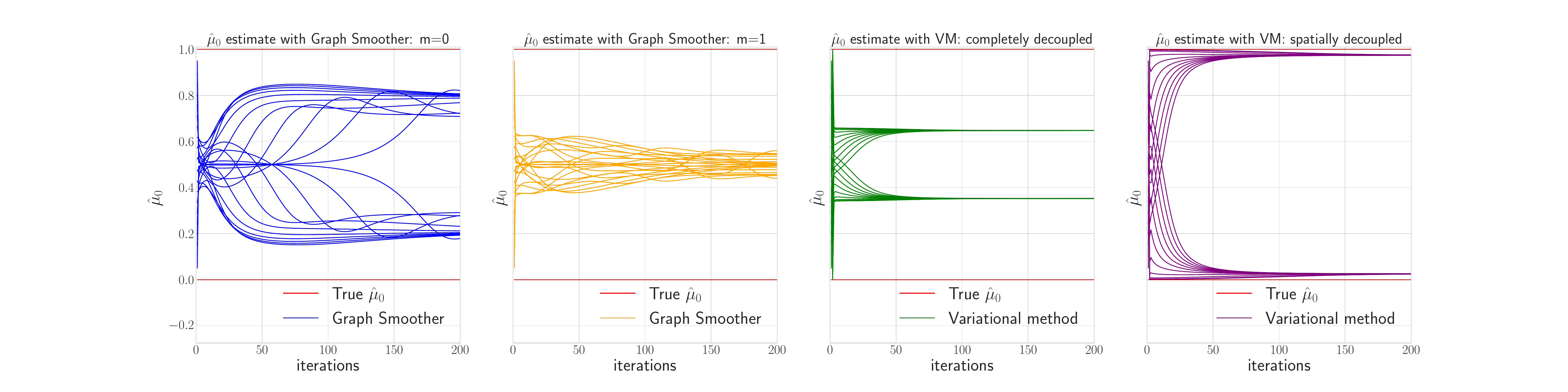}
\end{subfigure}
\caption{$M=10$. Estimation of $\hat{\mu}_0$ and $\hat{p}$ using approximate EM based on the Graph Smoother and the completely and spatially decoupled variational approximations by \cite{Ghahramani1997}. Horizontal axes correspond to EM iterations. 20 different EM initializations shown for each algorithm setting. Traces corresponding to the 2 elements of the initial distribution $\hat{\mu}_0$ and 4 elements of the transition matrix $\hat{p}$ are superimposed on each plot.} \label{fig:experimentaltime6kernel}
\end{figure}

In our experiments we considered the model described in Section \ref{subsec:model_spec}, with $\mathbb{X}=\{0,1\}$ and $T=200$, and generated a data set with true parameter values:
$$
\hat{\mu}_0(x^v)= 1,\;x^v=1, \forall v\in V, \quad
\{\hat{p}(x^v,z^v)\}_{x^v,z^v \in \mathbb{X}}=
\begin{pmatrix}
0.6 & 0.4 \\
0.2 & 0.8
\end{pmatrix}, \quad
c=2 \quad
\text{and} \quad
\sigma^2 =4.
$$
The EM algorithms based on the Graph Smoother and the fully and spatially decoupled variational approximations were run for $20$ different EM initializations. Tables \ref{tab:parametres_values_1} and \ref{tab:parametres_values_2} show the mean of the final estimates over the different EM initializations for $M=3$ and $M=10$, respectively. Graphical results are presented in figures \ref{fig:experimentaltime5par} and \ref{fig:experimentaltime5kernel} for $M=3$ and figures \ref{fig:experimentaltime6par} and \ref{fig:experimentaltime6kernel} for $M=10$.

In Figure \ref{fig:experimentaltime5par} the EM algorithms associated with the Graph Smoother converge to points closer to the true parameter values than those using the variational approximations. Using $m=1$ rather than $m=0$ in the former yields a slight increase in accuracy. In Figure \ref{fig:experimentaltime5kernel} the results using the Graph Smoother are not more accurate in all cases, but the completely decoupled variational method generally performs badly. In Figure  \ref{fig:experimentaltime6par} the results for the Graph Smoother are again more accurate.  In Figure \ref{fig:experimentaltime6kernel} it is notable that the estimates of the transition probabilities are a little more accurate with $m=0$ rather than $m=1$, but substantially more accurate than with either of the variational schemes. Again the completely decoupled variational approximation performs poorly. These observations are numerically evident also from tables \ref{tab:parametres_values_1} and \ref{tab:parametres_values_2}.

\subsection{Analyzing traffic flows on the London Underground}\label{subsec:LU}

Transport For London, the operator of the London Underground, has made publicly available \virg{tap} data, consisting of a 5\% sample of all Oyster card journeys in a week during November 2009 \citep{oyster}. The data consist of the locations and times of entry to and exit from the transport network for each trip.

Similar Transport for London data have be analyzed by \cite{silva2015predicting}, who developed models of numbers of trips between pairs of stations in the Underground network in order to quantify the effects of shocks such as line and station closures, and to predict traffic volumes. The modelling approach described by \cite{silva2015predicting}(Supporting Information) is very sophisticated, including several components such as regression of the numbers of passengers entering stations onto time, a cascade of nonparametric binomial models for the numbers of passengers inside the transport system who entered at each station and a Bayesian probabilistic flow model. One of many attractive features of this approach is that it avoids the computational cost of network tomography models for traffic data \citep{guimera2005worldwide,colizza2006role,newman2011structure} which is prohibitive in the context of large transport systems due to the exponential growth of problem size in the number of network links. Similar computational difficulties are encountered with some dynamic Bayesian network models of flow on transport networks. For example, \cite{hofleitner2012learning} propose a dynamic mixture model for travel times where the mixture component represents a time-varying congestion state associated with each link in the transport network. In principle, inference in this model can be performed using a particle filter, but as noted by \cite{woodard2017predicting}, due to high-dimensionality the cost of doing so accurately (with respect to Monte Carlo error) is very demanding.

It is not our objective to conduct as detailed modelling exercise as in these works, but rather to establish a proof of principle that the Graph Filter and Smoother are naturally suited to the topological structure of transport networks and show promise calibrating the model by estimating parameters, and for prediction. This leaves potential for a deeper investigation of traffic modelling using FHMMs in future work.


\begin{figure}[httb!]
	\centering
	\includegraphics[clip,width = \textwidth]{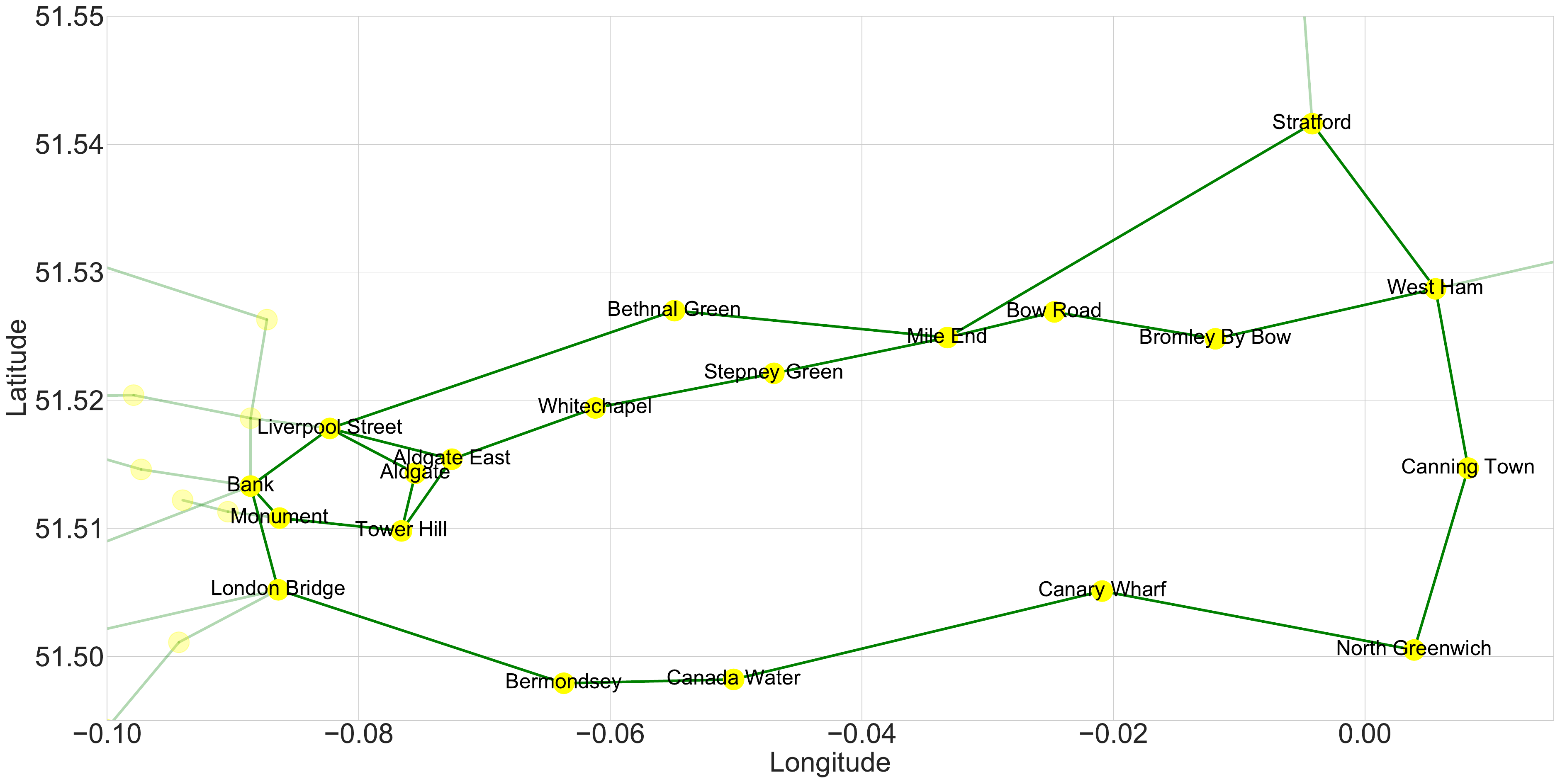}
	\caption{Locations in longitude and latitude of 20 stations from the the Central line and the Jubilee line of the London underground. Stations are represented as yellow nodes, while train lines are green edges.} \label{fig:londonnet}
\end{figure}

\subsubsection{Data and model} 
We consider 20 stations on a portion of the Central line and the Jubilee line.  Stations' names and geographical locations are shown in figure \ref{fig:londonnet}. The dataset consists of integer counts of the inflow and the outflow of passengers from Monday to Friday per station every $10$ minutes from 00:00 am to 00:00 am of the next day. The data are split into training given by Monday, Tuesday and Wednesday and test consisting of Thursday and Friday. 

In constructing our model we consider two factors per station, one for the inflow, the other for the outflow: for each $f \in \{1,\dots,20\}$, each corresponding to a station, we denote by $y_t^{f,\text{in}}$ and $y_t^{f,\text{out}}$ respectively the counts of passenger inflow and outflow at time $t$. We consider a hidden state variable associated with each direction of travel on each tube line segment connecting a pair of stations. These state variables are written $x_t^{i,j}$ to indicate the state of the line segment between stations $i$ and $j$ at time $t$ in the direction from $i$ to $j$. Thus the index set $V$ of the FHMM has one element corresponding to each direction of travel between each pair of stations which are connected by a single line segment on the tube network. The state-space of each variable $x_t^{i,j}$ is defined to be $\mathbb{X}=\{0,1,2,3\}$, with the interpretation of increasing levels of congestion on the line-segment and in the direction of travel corresponding to $(i,j)$.


\begin{figure}[h!]
	\centering
	\includegraphics[width = \textwidth]{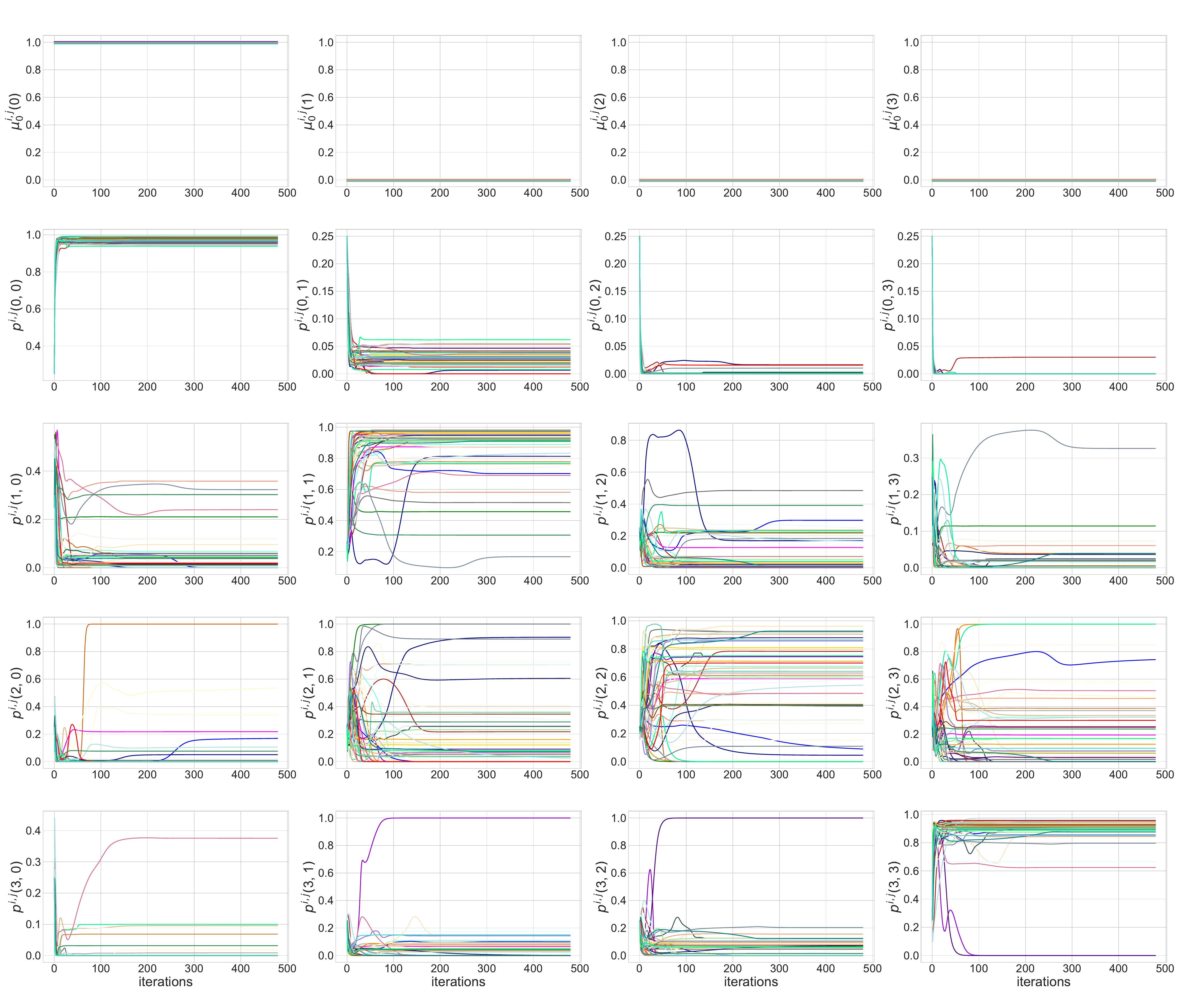}
	\caption{Estimation of the of initial distribution and transition matrix for each tube line using the approximate EM algorithm built around the Graph Filter-Smoother algorithm. See appendix B for algorithm details. Each coloured line corresponds to a different line and direction in the tube's network.} \label{fig:london_kernel}
\end{figure}

\begin{figure}[h!]
	\centering
	\includegraphics[width=0.8\textwidth]{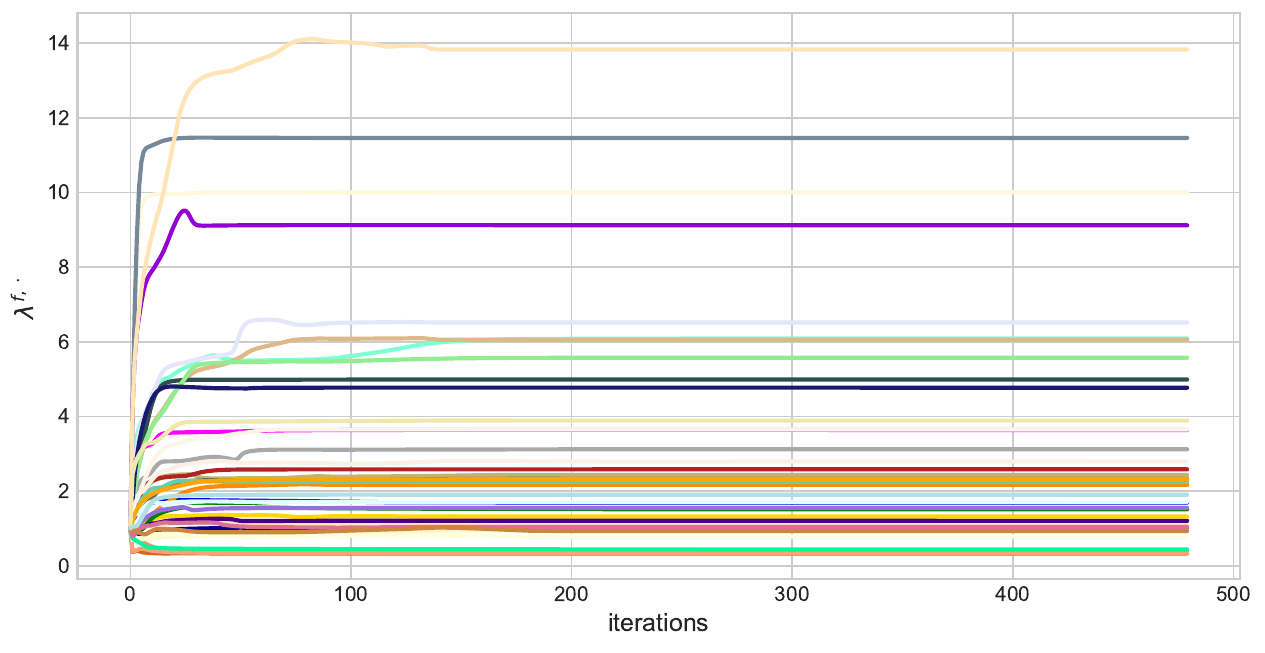}
	\caption{EM estimates using the Graph Filter-Smoother algorithm, of $\lambda^{f,\text{in}}$ and $\lambda^{f,\text{out}}$. Each coloured line correspond to a different station flow (inflow or outflow) in the tube's network.} \label{fig:london_lambda}
\end{figure}


The emission distribution is given by:
\begin{equation}\label{eq:emission_poi}
g(x,y) = \prod_{f=1}^{20} g^{f,\text{in}} \left ( x^{f,N(f)}, y^{f,\text{in}} \right ) g^{f,\text{out}} \left ( x^{N(f),f}, y^{f,\text{out}} \right ), \quad x \in \mathbb{X}^V,
\end{equation}
where $g^{f,\text{in}} \left ( x^{f,N(f)}, y^{f,\text{in}} \right )$ and $g^{f,\text{out}} \left ( x^{N(f),f}, y^{f,\text{out}} \right )$ are Poisson distributions:
\begin{align}
&g^{f,\text{in}}(x^{f,N(f)}, y^{f,\text{in}}) = \frac{ \left ( \lambda^{f,\text{in}} \sum_{j \in N(f)} x^{f,j} \right )^{y^{f,\text{in}}}}{(y^{f,\text{in}})!} e^{-(\lambda^{f,\text{in}} \sum_{j \in N(f)} x^{f,j})}\\
&g^{f,\text{out}}(x^{N(f),f}, y^{f,\text{out}}) =  \frac{ \left ( \lambda^{f,\text{out}} \sum_{i \in N(f)} x^{i,f} \right )^{y^{f,\text{out}}}}{(y^{f,\text{out}})!} e^{-(\lambda^{f,\text{out}} \sum_{i \in N(f)} x^{i,f})}
\end{align}
where $x_t^{f,N(f)}=(x_t^{f,j})_{j \in N(f)}$ and $x_t^{N(f),f}=(x_t^{i,f})_{i \in N(f)}$. Each pair of parameters $\lambda^{f,\text{in}}, \lambda^{f,\text{out}}$ has the interpretation as the overall intensity of inflow and outflow at the station corresponding to factor $f$ . The initial distribution and the transition matrix are written:
\begin{align}
&\mu_0(x) = \prod_{i = 1}^{20} \prod_{j \in N(i)} \mu_0^{i,j} \left ( x^{i,j} \right ), \quad x \in \mathbb{X}^V  \\ 
&p(x,z) = \prod_{i = 1}^{20} \prod_{j \in N(i)} p^{i,j} \left ( x^{i,j} , z^{i,j}\right ), \quad x,z \in \mathbb{X}^V.
\end{align}

For this model we have $M=48$ (the number of line segments between stations multiplied by two for the directions of travel on each line segment) and $L=4$, hence the cardinality of the overall state-space $\mathbb{X}^V$ is $4^{48}$. The factor graph has $\max\limits_{K \in \mathcal{K}}\mathbf{card}(N^m_v(K))=7$ and $\max\limits_{K \in \mathcal{K}}\mathbf{card}(N^m_f(K))=4$ hence the computational cost of the Graph Filter-Smoother, when $m=0$ and $\mathcal{K}=\{\{1\}, \dots, \{48\}\}$, is $\mathcal{O}[(48 \cdot 4)TL^{2 \cdot 7}]$, compared to $\mathcal{O}(48 TL^{48+1})$ for the FHMM forward-backward algorithm proposed by \cite{Ghahramani1997}. 

The total number of parameters of the model 1000: 40 from the flow intensity (2 for each of the 20 stations), 768 from the transition matrices, and 192 from the initial distribution. 

We now describe four sets of experiments. In the first set we demonstrate parameter estimation. In the second and third sets of experiments we illustrate predictive capabilities and comparison to a Long short-term Memory (LSTM) recurrent neural network, without and with missing data. In the fourth set of experiments we consider prediction in the presence of structural changes corresponding to hypothetical disruption and modification of the tube network.  

\paragraph{Parameter estimation.}

The parameters are estimated using an approximate EM algorithm, where the approximate smoothing distributions from the Graph Filter-Smoother are used in the E-step, more details are available in appendix B. We took $m=0$ and $\mathcal{K}$ as in subsection \ref{subsec:model_spec} (partition of singleton over $V$). The algorithm was run on the training set with random EM initializations for the estimated parameters. The estimates are shown in figure \ref{fig:london_kernel} and figure \ref{fig:london_lambda}. The estimates are found to be robust with respect to the variability in the EM initializations.

The estimate of the initial distribution for each $x_0^{i,j}$ puts very high probability on state $0$, which is to be expected since the tube lines are closed in the very early hours of the morning. 

The interpretation of the estimated transition probabilities is less straightforward, but we can discern interesting structure: upon inspecting the results we found that the state variables ${x_t^{i,j},i\neq j}$ could be roughly partitioned into four groups according to their estimated transition probabilities. The first group, which we shall heuristically refer to as \virg{stable}, are those for which the transition probabilities put most weight on maintaining a constant state, i.e. high probabilities on the diagonal of the transition matrix. Most state variables belong to this group. The second group, which we shall refer to as \virg{quiet}, consists of those variables for which there is little probability of transitioning to state $\{3\}$, which corresponds to the highest level of flow. State variables in the \virg{quiet} group typically correspond to line segments connecting hubs and low-flow stations (e.g Liverpool Street inflow with Aldgate outflow); high-flow stations and low-flow stations (e.g Canary Wharf inflow with North Greenwich outflow); low-flow stations and low-flow stations (e.g. Stepney Green inflow with Whitechapel outflow). We call the third group \virg{busy}, distinguished by transition probabilities in which transitions to states  $\{2,3\}$ are somewhat likely. These estimates are typical for lines connecting high-flow stations with high-flow stations, e.g. Liverpool Street inflow with Bank outflow. 


The interpretation of figure \ref{fig:london_lambda} is straightforward, indeed $\lambda^{f,\cdot}$ scales the intensity of the flow, hence bigger estimates of $\lambda^{f,\cdot}$ are referred to stations with higher flows.

\paragraph{Passenger flow prediction without missing data.}
Prediction over the test data is performed through the posterior predictive of Graph Filter-Smoother, i.e. per each time step $t\geq 0$ a posterior predictive sample $y_{t+1}$ is obtained as:
\begin{equation} \label{eq:posterior_pred}
x_t \sim \tilde{\pi}_{t|t}, \quad x_{t+1} \sim p(x_t,\cdot), \quad  y_{t+1} \sim g(x_{t+1}, \cdot) ,  
\end{equation}
where $g, p$ are computed through the EM algorithm run on the training set (Monday, Tuesday, Wednesday) and $\tilde{\pi}_{\cdot|\cdot}$ is obtained by running recursively the Graph Filter on the test data (Thursday, Friday). Note that \eqref{eq:posterior_pred} provides a sample from the posterior predictive, whose mean is then used as prediction for the test set. For ease of presentation we report the plots on four stations, plots on all the other stations are available in appendix C. The results indicate that the model is able to track the peaks of the inflow and outflow that occur during the morning and afternoon rush hours, which vary in magnitude from station to station. Moreover, credible intervals show satisfying coverage of the true data. 

\begin{figure}[h!]
	\centering 
	\includegraphics[clip,width = 0.9\textwidth]{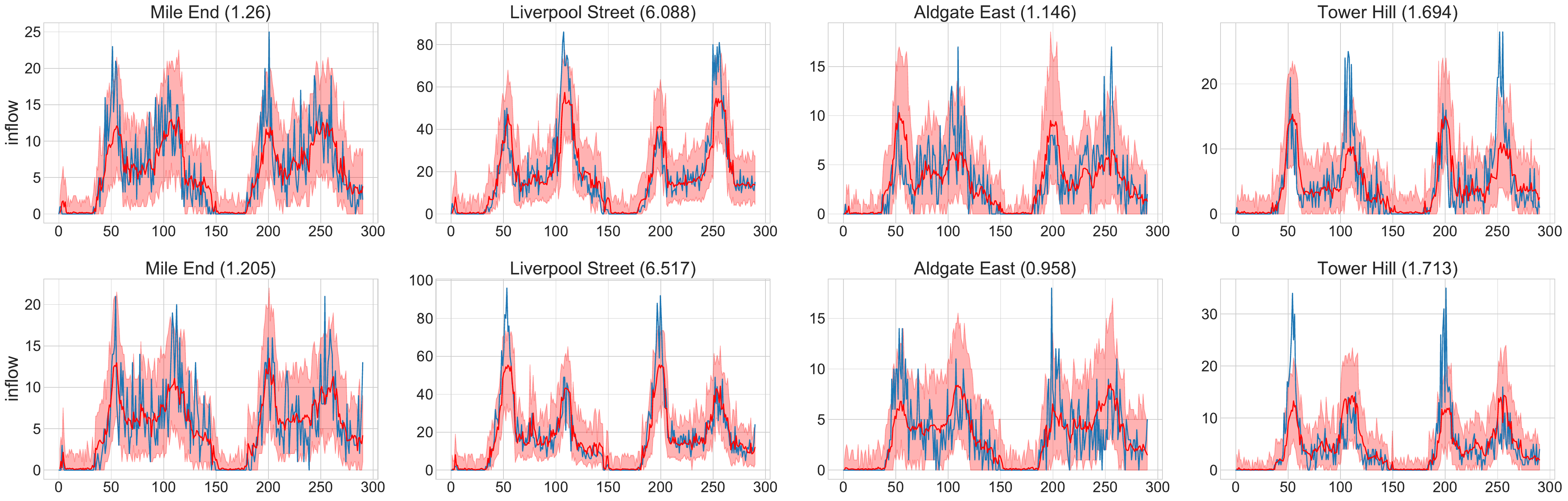}
	\caption{One step-ahead posterior predictive mean (solid red line) and 95\% credible intervals (red bands) using the Graph Filter-Smoother on four stations. Blue solid lines stand for the observed data from Thursday to Friday. The first row shows the inflow, the second row shows the outflow. The name of the station is reported at the top of each plot, along with the estimate of the corresponding $\lambda^{f,\cdot}$. } \label{fig:london_GFSprediction}
\end{figure}

\begin{figure}[h!]
	\centering
	\includegraphics[clip,width = 0.9\textwidth]{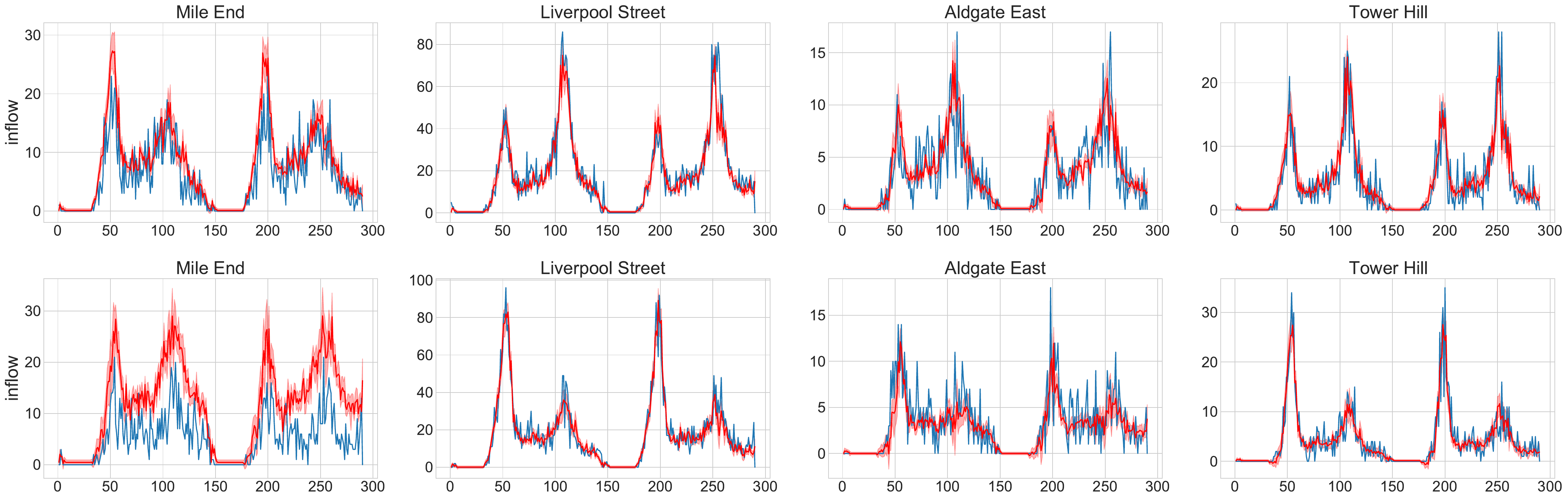}
	\caption{One step-ahead prediction with the LSTM. Per time step, a sample of size 100 is built over initializations and random number seeds used in training the  LSTM, where solid red lines show the mean and red bands show the region between the 0.025 and the 0.975 quantiles. Blue solid lines stand for the observed data from Thursday to Friday. The first row reports the inflow, the second row reports the outflow. The name of the station is written at the top of each plot.} \label{fig:london_LSTMprediction}
\end{figure}

We compare the proposed method with an LSTM trained on one-step-ahead prediction over the inflow-outflow, more details about the architecture and training are available in appendix C. The LSTM takes as input inflow-outflow data over all the stations at time $t$ and output predictions for time $t+1$. 100 LSTMs (different initializations and seeds) are trained on Monday, Tuesday and Wednesday. As for the Graph Filter-Smoother method, testing is performed on Thursday and Friday. We report in figure \ref{fig:london_LSTMprediction} the LSTMs predictions on four stations only, more details are available in the appendix C. An LSTM does not itself provide any uncertainty quantification associated with its predictions. We employ a commonly used heuristic of considering the variability of predictions across different initializations and random number seeds of the stochastic gradient algorithm used to train the LSTM. Figure \ref{fig:london_LSTMprediction} shows that different trainings of the LSTM lead to similar performances with bands that are narrower than figure \ref{fig:london_GFSprediction}.

\begin{table}[httb!]
	\centering
	\resizebox{0.9\columnwidth}{!}{
		\begin{tabular}{cccccccc}
			\hline
			\hline
			Method                                 & No missing        & Missing without peak  & Missing with peak  \\
			\hline
			\hline
			Graph Filter-Smoother                   & $4.796 \pm 0.011$ & $5.399\pm 0.014$     & $5.493 \pm 0.019$ \\
			\hline
			LSTM                                   & $4.639 \pm 0.138$ & $7.427 \pm 2.080$     & $6.178 \pm 1.425$ \\
			\hline
		\end{tabular}
	}
	\caption{RMSE comparison between the posterior predictive mean of Graph Filter-Smoother and LSTM. \virg{No missing} refers to the performance on the full test set. \virg{Missing without peak} refers to the test set performance when data from $t=130$ (around 9 pm on Thursday) to $t=170$ (around 4 am on Friday) are missing. \virg{Missing with peak} refers to the test set performance when data from $t=230$ (around 2 pm on Friday) to $t=270$ (around 9 pm on Friday) are missing. }
	\label{tab:GFSvsLSTM}
\end{table}

Table \ref{tab:GFSvsLSTM} reports RMSEs for the posterior predictive mean of Graph Filter-Smoother and the LSTM. The mean and standard deviation of the RMSE for the posterior predictive mean of Graph Filter-Smoother are computed over 100 samples of the posterior predictive mean. The mean and standard deviation of the RMSE for LSTM are computed over 100 LSTMs optimization (i.e. different initializations and seeds). The proposed algorithm has performances that are comparable with the LSTM and we note here that the LSTM has been explicitly trained to minimize the RMSE on the one-step-ahead prediction.

\begin{figure}[h!]
	\centering
	\includegraphics[clip,width=0.9\textwidth]{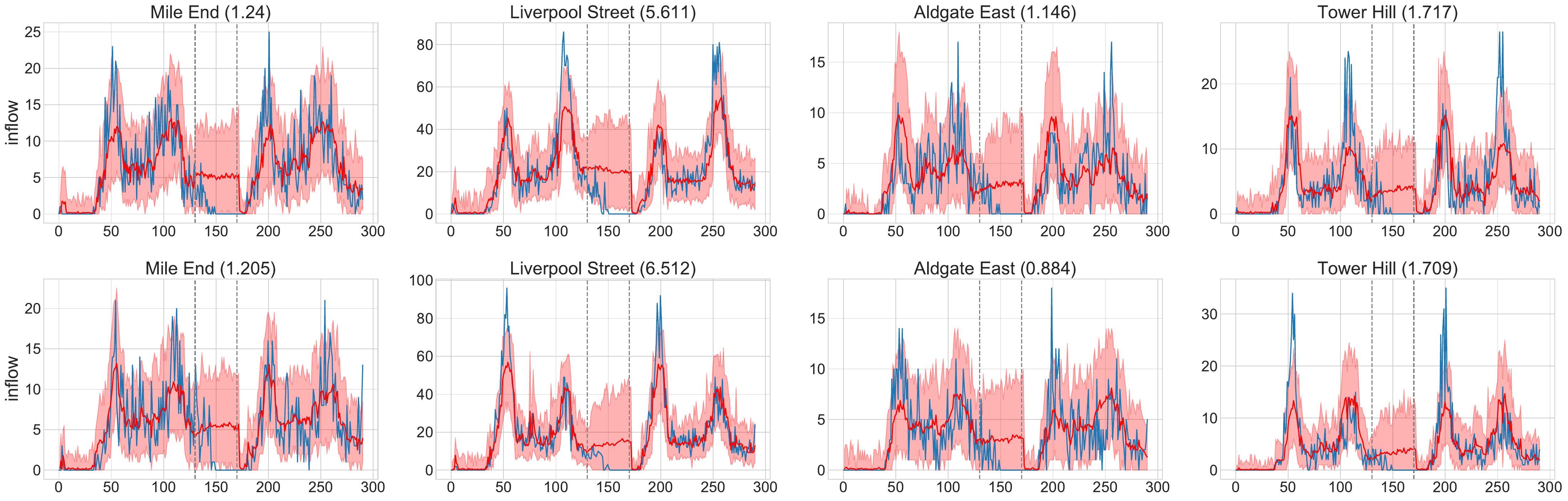}
	\caption{Multi-step-ahead posterior predictive mean (solid red line) and 95\% credible intervals (red bands) using the Graph Filter-Smoother on four stations with missing data in a quiet period (without peak). Blue solid lines stand for the observed data from Thursday to Friday (the missing data are included). Grey dashed lines show the start and the end of the missing data window.  The name of the station is reported at the top of each plot, along with the estimate of the corresponding $\lambda^{f,\cdot}$. } \label{fig:london_GFSpredictionnopeak}
\end{figure}

\begin{figure}[h!]
	\centering
	\includegraphics[clip,width=0.9\textwidth]{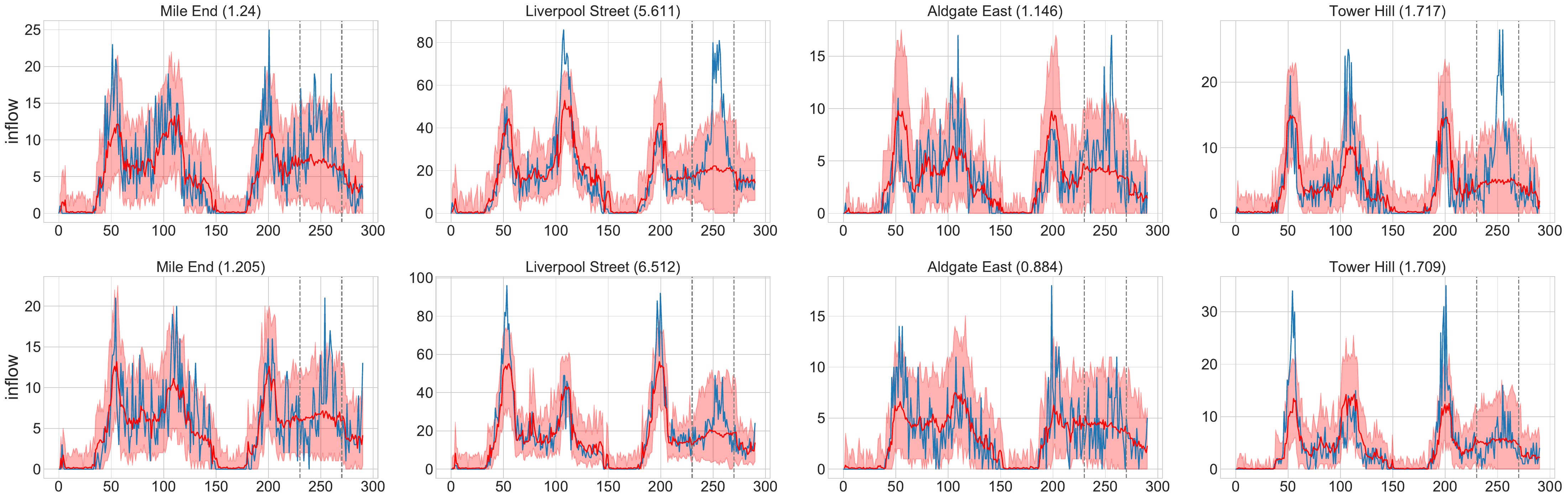}
	\caption{Multi-step-ahead posterior predictive mean (solid red line) and 95\% credible intervals (red bands) using the Graph Filter-Smoother on four stations with missing data in a busy period (with peak). Blue solid lines stand for the observed data from Thursday to Friday (the missing data are included). Grey dashed lines show the start and the end of the missing data window.  The name of the station is reported at the top of each plot, along with the estimate of the corresponding $\lambda^{f,\cdot}$.} \label{fig:london_GFSpredictionpeak}
\end{figure}

\paragraph{Passenger flow prediction with missing data.}

Suppose now that data cannot be collected in the period from $t$ to $t+h$. In practice this could correspond, for example, to hardware malfunctions at stations such that people come in and out without tapping their Oyster cards. The Graph Filter can easily be used to impute missing data by applying the \virg{prediction} operator without the \virg{correction} operator: we can sample from the posterior predictive from $t$ to $t+h$ as follows:

\begin{align}
&x_t \sim \tilde{\pi}_{t|t}, \\
&x_{t+1} \sim p(x_t,\cdot), \quad  \tilde{y}_{t+1} \sim g(x_{t+1}, \cdot) , \\
&x_{t+2} \sim p(x_{t+1},\cdot), \quad  \tilde{y}_{t+1} \sim g(x_{t+2}, \cdot) , \\
 &\dots \\
&x_{t+h} \sim p(x_{t+h-1},\cdot), \quad  \tilde{y}_{t+h} \sim g(x_{t+h}, \cdot).
\end{align}
Similarly, the LSTM can do multi-step-ahead prediction by using the output on the current time step as input for the next time step. We note that the considered LSTM is trained exclusively for one-step-ahead prediction (i.e. LSTM maps $y_t$ onto $y_{t+1}$). As an alternative, one could train LSTM on multi-step-ahead predictions (i.e. LSTM maps $y_t$ onto $y_{t+1}, \dots, y_{t+h}$), which requires to know the missing data window in advance. However, that is generally not plausible in practice, e.g. missing data in traffic applications are caused by broken sensors and the time they are out of order cannot be predicted in advance.
\begin{figure}[h!]
	\centering
	\includegraphics[clip,width=0.9\textwidth]{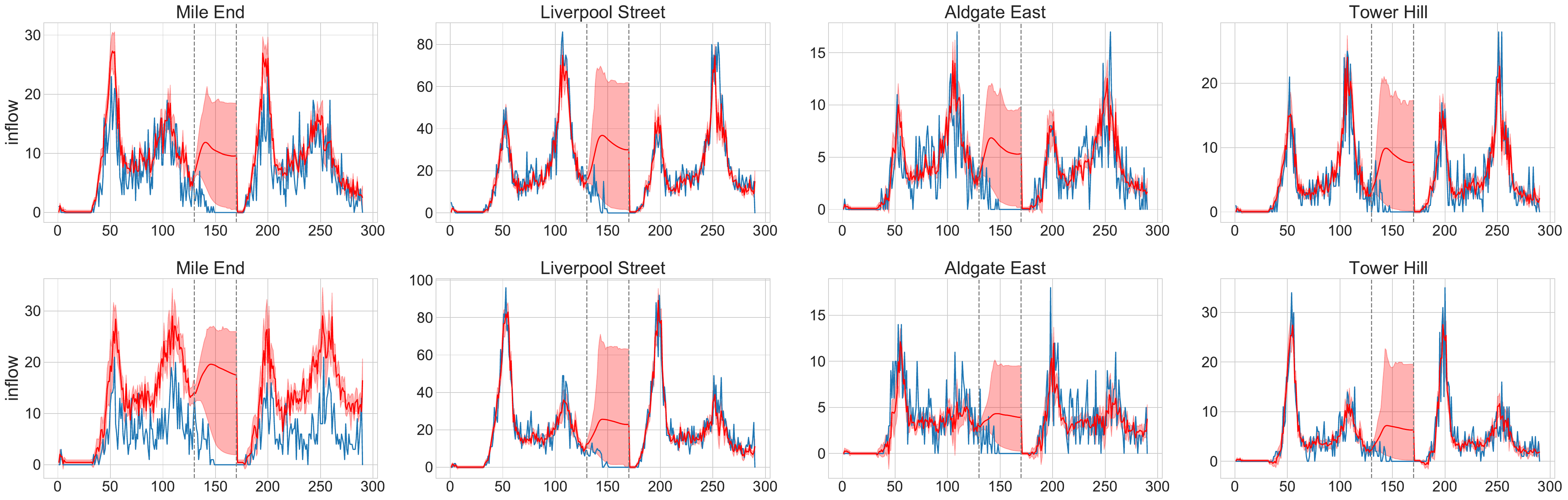}
	\caption{Multi-step-ahead prediction with the LSTM on four stations with missing data in a quiet period (without peak). Per time step, a sample of size 100 is built over different training of the LSTM where solid red lines show the mean and red bands show the region between the 0.025 and the 0.975 quantiles. Blue solid lines stand for the observed data from Thursday to Friday. Grey dashed lines show the start and the end of the missing data window. Stations' names are reported at the top of each plot.} \label{fig:london_LSTMpredictionnopeak}
\end{figure}

\begin{figure}[h!]
	\centering
	\includegraphics[clip,width=0.9\textwidth]{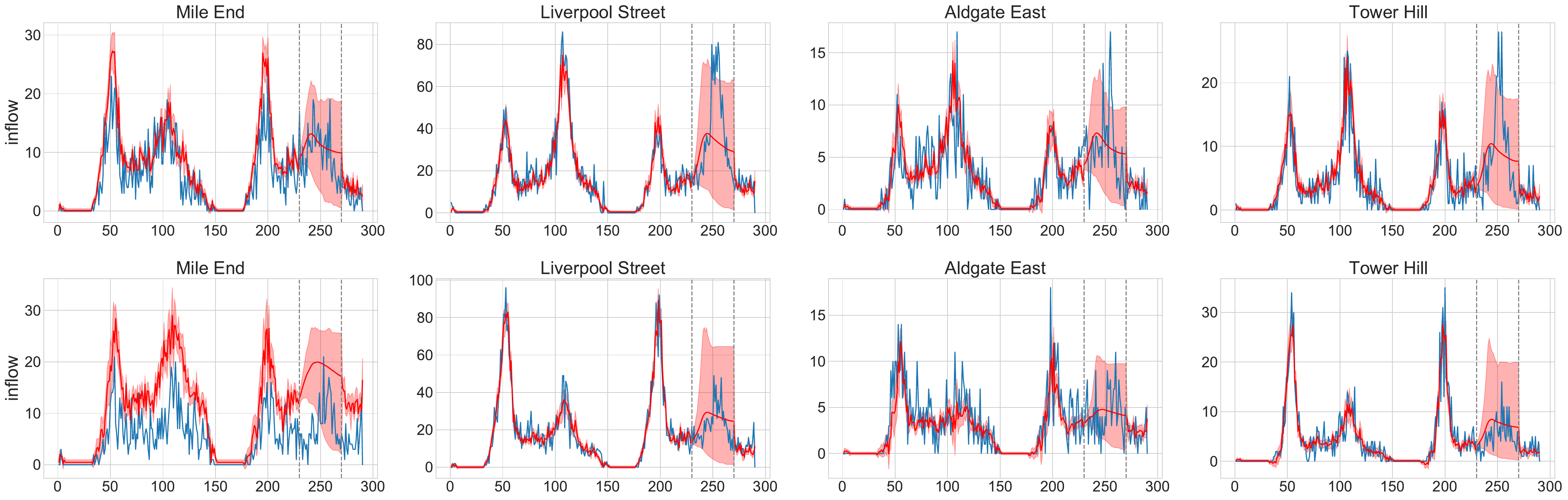}
	\caption{Multi-step-ahead prediction with the LSTM on four stations with missing data in a busy period (with peak). Per time step, a sample of size 100 is built over different training of the LSTM where solid red lines show the mean and red bands show the region between the 0.025 and the 0.975 quantiles. Blue solid lines stand for the observed data from Thursday to Friday. Grey dashed lines show the start and the end of the missing data window. Stations' names are reported at the top of each plot.} \label{fig:london_LSTMpredictionpeak}
\end{figure}

RMSE performances between LSTM and the proposed method in a missing data scenario are compared in the second and the third column of Table \ref{tab:GFSvsLSTM}. Two cases are distinguished: missing data in a quiet period (second column) and missing data in a busy period (third column). The LSTM performance is evidently not robust, with substantial variability in estimates corresponding to different initializations. On the contrary, the proposed method is more stable, with a standard deviation that is 100 times lower than LSTM and an RMSE that is significantly lower in mean. This appears in the experiments for missing data in both busy and quiet periods. Graphical illustrations on selected stations can be found in figures \ref{fig:london_GFSpredictionnopeak}, \ref{fig:london_GFSpredictionpeak}, \ref{fig:london_LSTMpredictionnopeak}, \ref{fig:london_LSTMpredictionpeak}, more figures can be found in appendix C.

\paragraph{Flow prediction under structural change - transport network disruptions and modifications.}
In this final set of experiments we demonstrate multi-step ahead prediction in a scenario where a change in the model occurs at a known point in time. We consider two types of change, which have the interpretation of i) hypothetical disruptions (or line segment closures) on the tube network, and ii) hypothetical modifications to the tube network by adding lines which do not currently exist. Making predictions in these scenarios is achievable because the hidden state variables have a straight-forward interpretation. This is in contrast, for example, to an LSTM, whose parameters are not interpretable in terms of the lines of the tube network and their respective levels of congestion.

For the case of a hypothetical disruption, the procedure is as follows. We first estimate model parameters using the Graph Filter-Smoother and EM as described above, using the data up to time point $200$. Beyond time step $200$ we then perform multi-step ahead prediction, again as described above, but with the state variable $x_t^{i,j}$ corresponding to the disrupted line conditioned to zero. For the case of a hypothetical addition of a line to the tube network, we again estimate model parameters up to time point $200$. For time steps beyond $200$ we introduce additional state variables $x_t^{i,j}$ corresponding the new lines and (for simplicity) set the prior Markov transition probability matrix for these individual state variables to the element-wise average of the corresponding matrices for all other lines in the network. We then perform multi-step ahead prediction for the augmented network.




Figure \ref{fig:modifiedlane} shows how the predicted passenger flows behave in two scenarios: one when a hypothetical disruption on the line between Bank and London Bridge is enforced, and one when the tube network is hypothetically modified by adding a line between Mile End and North Greenwich. These results illustrate that the disruption leads to decreased inflow-outflow at the stations at either end of the affected line segmented, while adding a line segment between two stations  results in increases inflow-outflow of the connected stations. 

\begin{figure}[h!]
	\begin{subfigure}[t]{0.5\textwidth}
		\includegraphics[clip,width=1\textwidth]{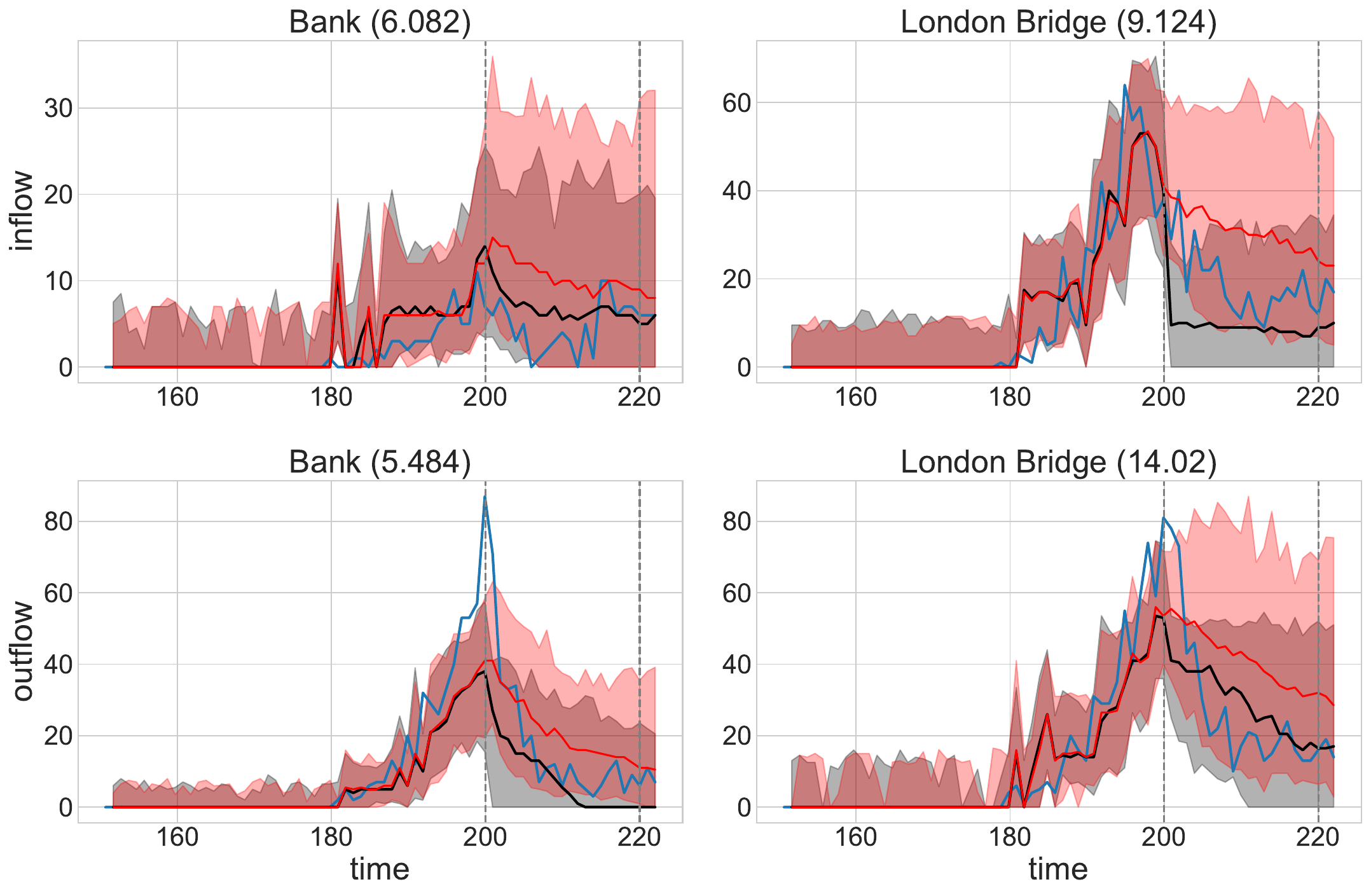}
	\end{subfigure}
	\begin{subfigure}[t]{0.5\textwidth}
		\includegraphics[clip,width=1\textwidth]{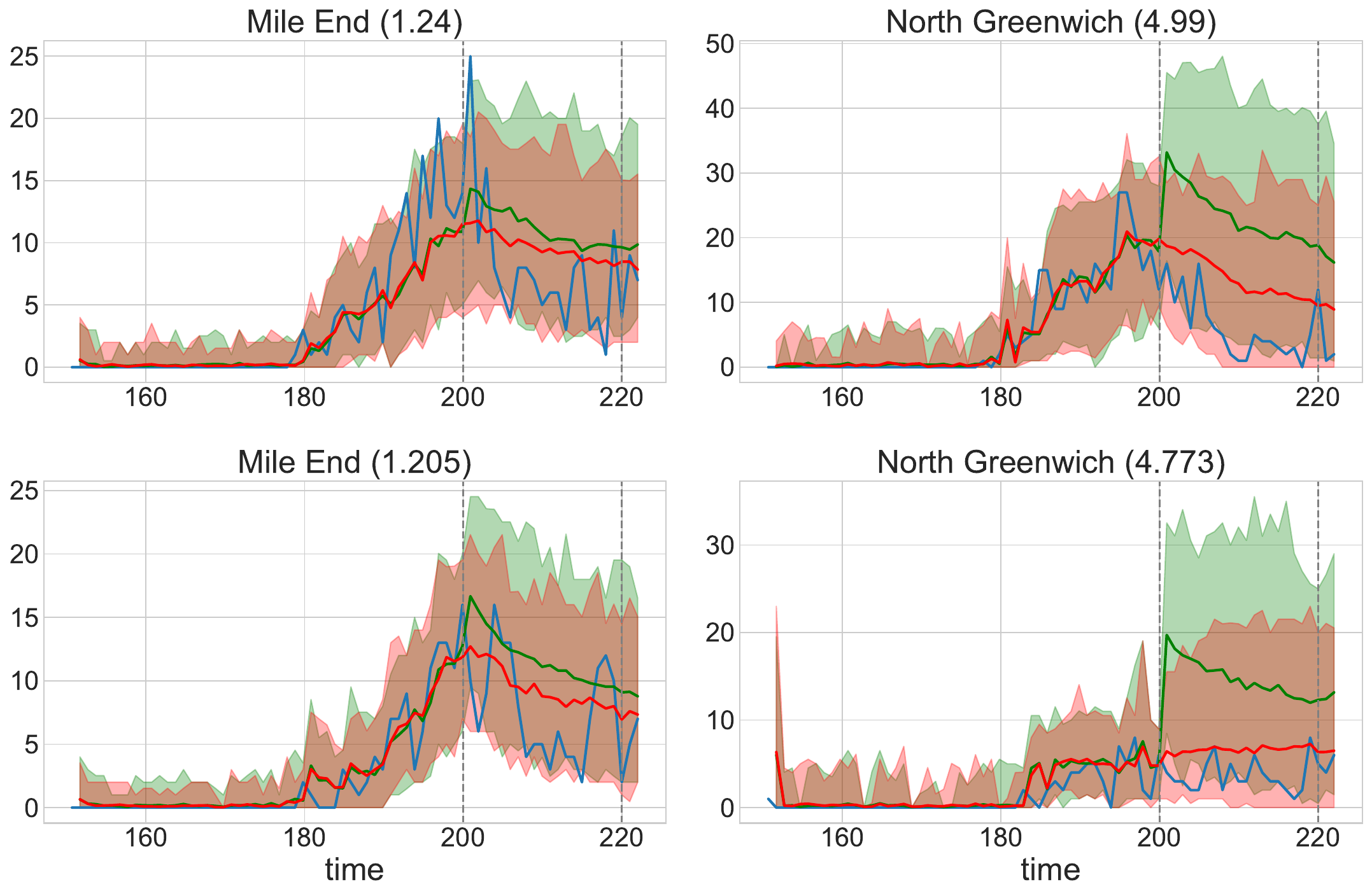}
	\end{subfigure}
	\caption{Predicted flows for Bank and London Bridge under a hypothetical disruption at time $200$ on the line between Bank and London Bridge (first and second columns), and for Mile End and North Greenwich when adding a line between these two stations at time $200$  (third and fourth columns). Top row: in-flow, bottom row: outflow. Black solid line and grey shading: posterior predictive mean and 95\% credible interval in the presence of the hypothetical disruption. Green solid line and green shading: posterior predictive mean and 95\% credible interval in the case when a hypothetical line is added. Red solid line and light pink shading: posterior predictive mean and 95\% credible interval without disruption or addition of a line. Darker pink/brown shading indicates intersection with respectively grey/green. Blue solid lines are the observed real data, in absence of disruption or addition of a line. Each subfigure title gives the station name and estimated $\lambda^{f,\cdot}$ value based on the data up to time $200$.} \label{fig:modifiedlane}
\end{figure}

\section{Conclusion}\label{sec:conc}
There are number of ways in which this work could be extended or generalized. From an algorithmic point of view, it seems natural to explore hybrids between variational methods of the sort proposed by \cite{Ghahramani1997} and the Graph Filter-Smoother. In the spirit of variational methods, could a layer of optimization in Kullback-Leibler divergence be somehow combined with the approximations in Graph Filter-Smoother, with the aim of yielding even more accurate approximations?

From a theoretical point of view, it would be interesting to investigate whether the kind of mathematical tools we have used to study the Graph Filter and Smoother could also help rigorously quantify the approximation error in variational methods for FHMMs, which appears to be an unsolved problem. Another interesting question beyond the scope of the present work is whether the generalized Dobrushin Comparison Theorems developed by \cite{rebeschini2014comparison} might help loosen some of the technical assumptions on which our results rely.

From a modelling point of view, there are many directions in which the London Underground example could be further investigated. The model we have considered is a simple prototype, with straightforward spacial interactions among lanes and stations. It would be interesting to see how it performs with spacial interactions over time, i.e. lanes do not evolve independently from each other, and with a richer state space. Even the emission distribution could be chosen more carefully, indeed a zero-inflated Poisson could be used to model more precisely the quiet periods.  


\section{Acknowledgments}
Nick Whiteley's research is supported by a fellowship from the Alan Turing Institute.


\vskip 0.2in

\bibliography{References}


\end{document}